\setlist[enumerate]{leftmargin=.5in}
\setlist[itemize]{leftmargin=.5in}
\title{Blind identification of stochastic block models from dynamical observations\thanks{Submitted to the editors DATE.
\funding{This work was supported by the European Union's Horizon 2020 research and innovation programme under the Marie Sklodowska-Curie grant agreement No 702410 (M. Schaub).}}}
\author{Michael T. Schaub\thanks{Institute for Data, Systems and Society, Massachusetts Institute of Technology, USA \newline\indent and Department of Engineering Science, University of Oxford, UK 
    (\email{michael.schaub@eng.ox.ac.uk}).}
\and Santiago Segarra\thanks{Department of Electrical and Computer Engineering, Rice University, USA
  (\email{segarra@rice.edu}).}
  \and John N. Tsitsiklis\thanks{Laboratory of Information and Decision Systems, Massachusetts Institute of Technology, USA (\email{jnt@mit.edu}).}}
\DeclareMathOperator{\diag}{diag}
\crefname{hypothesis}{Hypothesis}{Hypotheses}
\newcommand{\nA}{\bm{L}}
\newcommand{\nEA}{\bm{\mathcal{L}}}
\newcommand{\EA}{\bm{\mathcal A}}
\newcommand{\ED}{\bm{\mathcal D}}
\newcommand{\elambda}{\mu}
\newcommand{\sCov}{\widehat{\bm{\Sigma}}(T)}
\newcommand{\Cov}{\bm{\Sigma}(T)}
\def\smid{\,|\,}
\begin{document}

\maketitle

\begin{abstract} 
We consider a blind identification problem in which we aim to recover a statistical model of a network without knowledge of the network's edges, but based solely on nodal observations of a certain process.   
More concretely, we focus on observations that consist of single snapshots taken from multiple trajectories of a diffusive process that evolves over the unknown network. 
We model the network as generated from an independent draw from a latent stochastic block model (SBM), and our goal is to infer both the partition of the nodes into blocks, as well as the parameters of this SBM.
We discuss some non-identifiability issues related to this problem and present simple spectral algorithms that provably solve the partition recovery and parameter estimation problems with high accuracy.
Our analysis relies on recent results in random matrix theory and covariance estimation, and associated  concentration inequalities.
We illustrate our results with several numerical experiments.
\end{abstract}

\begin{keywords}
  System identification, network inference, topology inference, consensus dynamics, stochastic block model, blind identification
\end{keywords}

\begin{AMS}
    68R10, 62M15, 60B20, 15A29, 15A18
\end{AMS}

\section{Introduction}

We consider the problem of inferring a statistical model of a network with unknown adjacency matrix $\bm A \in \{0,1\}^{n \times n}$, based on snapshots of multiple trajectories of a dynamical process that evolves over the network.
Specifically, we are interested in the following setup (a more detailed description will be provided in~\Cref{sec:problem_setup}):
We observe $s$ independent samples $\bm x^{(1)}_T, \ldots,\bm x^{(s)}_T$ of a dynamical process at a fixed time $T$.
Each sample is generated by the discrete-time system
\begin{equation}\label{eq:general_setup}
    \bm x_{t+1}^{(i)} = f(\bm A) \bm x_t^{(i)}, \qquad i = 1,\ldots, s,
\end{equation}
initialized at some unknown $\bm x_0^{(i)}$, independently for each $i$.
Each $\bm x_0^{(i)}$ is modeled as a random vector with unit covariance matrix ($\mathbb{E}[\bm{x}_0^{(i)} [\bm{x}_0^{(i)}]^\top] = \bm{I}$) drawn from some sub-Gaussian distribution (e.g., a standard multivariate Gaussian).
Furthermore, we assume that the function $f$ describing the dynamical network process is known and corresponds to a class of diffusive processes.
Finally, $\bm A$ is assumed to be drawn independently according to a stochastic block model (SBM), so that $\bm A$ is the same, albeit unknown, for each sample $\bm x_T^{(i)}$.
Based on such data, our task is to identify the generative model underpinning the network:  we wish to infer the parameters and the latent node partition of the SBM, 
without observing the network edges, by relying solely  on the observations $\{\bm x_T^{(i)}\}_{i=1}^s$ at the nodes.

\subsection{Motivation}
In many applications we are confronted with the following scenario: we observe snapshots of the state of a system at particular times, and based on these observations we wish to infer the nature of the (dynamical) interactions between the entities that we observe.
Examples abound and can be found across different fields~\cite{timme2014revealing,ljung1998system,ljung2010perspectives}.

Consider, e.g., measuring some node activities such as the expression of opinions at $s$ different, sufficiently separated instances of time in a social network.
While the underlying social connections are typically not observable, the latent (unobserved) network will be essentially static, and may be assumed to have emerged according to some homophilic interactions describable by an SBM.
The measured opinion profiles will be shaped by both some (random) initial personal opinions of the individuals as well as certain network effects.
Similar examples may arise when measuring (structurally stable) biological systems responding to different stimuli.
For instance, we might have access to a set of measurements from certain regions of the brain and would like to infer the coupling between these regions.
As another example, we might be interested in inferring a network of genetic interactions from multiple snapshots of gene expression, e.g., as obtained from single cell RNA-sequencing~\cite{Saliba2014}. 
As taking these expression measurements will destroy the system under consideration, we are left with a few snapshots of data, from which we want to reconstruct (a model of) the underlying interactions.
Finally, we may consider the monitoring of ecological populations from abundance data~\cite{Gray2014}, where we often can only access snapshot information at specific instances of time, but are interested in knowing the network of interactions between the species.  

The above are canonical examples of problems where the system of interest is composed of a large number of different entities and can be conveniently conceptualized as a network.
In this abstraction, nodes represent the individual entities and the couplings between these nodes take place along the edges of the network. 
In many settings, we may have reason to believe that an observed dynamical process emerges from the interaction between nodes over such a network. While we may have some knowledge about the dynamics of each individual node, the edges of the network -- which enable the interactions between nodes -- are often unknown or only partially known. 
In such cases it is often desirable to learn the structure of the network based on observational data. 
This is a general class of network inference problems, which is a fundamental constituent of network analysis, but has received limited attention.

On the other hand, there are many situations where inferring exactly the couplings  in the network e.g., identifying all edges,  is out of reach, for a number of reasons.
First, we may not have access to sufficiently many samples. 
Second, the data and the network itself may be subject to some fluctuations with associated uncertainties.
Finally, we may only be able to partially observe the dynamical system,  so that the full model is non-identifiable.

In such cases, it is then reasonable to aim at an \emph{approximate} description of the system at hand, in terms of a statistical model that captures the coarse-grained features of the underlying network. This is precisely the type of problem addressed in this paper, for the special case where
the dynamics are of a linear, diffusive type, and the network model to be inferred is an SBM.

\subsection{Contributions}
We present a rigorous study of blind model identification for random networks.
Our main contributions are as follows:

First, we present general algorithmic ideas on how to solve a blind identification problem using spectral methods, and introduce simple algorithms to solve the concrete problems considered in this paper (see~\Cref{sec:algorithmic_ideas,sec:parameter_rec}, and~\Cref{alg,alg2}).

Second, we establish a concentration inequality about  the sample covariance matrix that is calculated on the basis of the observed samples (see~\Cref{T:main_result,T:main_result_gen_f}).

Finally, we discuss how this concentration result translates into explicit bounds for the recovery error of the latent partitions (see~\Cref{T:result_misclassification}), and provide bounds on the estimation errors for the parameters of the SBM, for the specific example of a planted partition model (see~\Cref{T:parameter_estimation_result}).

\subsection{Related work}
Many different network inference notions have featured in the literature.
For instance, the estimation of ``functional'' couplings based on statistical association measures such as correlation or mutual information, is an analysis often encountered in the context of biological systems~\cite{honey2009predicting,overbeek1999use,Greicius2009,Smith2002}.
Another type of network inference is causal inference, as discussed in, e.g.,~\cite{Bolstad2011,Pearl2009,Spirtes2016}, in which one aims to infer directed graphs that encode direct causal influences between the entities.
In contrast, the type of network inference we consider here is what may be called ``topological'' inference: given a system of dynamical units, we want to infer their direct ``physical'' interactions.
Some recent surveys include~\cite{timme2014revealing,Wang2016,mateos_2018_connecting,giannakis2018topology}. 
More specifically, for an evolving dynamical process we might want to infer the underlying adjacency matrix. 
This may be viewed as a system identification problem, as considered in Control Theory~\cite{ljung1998system,ljung2010perspectives}. 
However, in the system identification literature, one can usually control the inputs to the system and observe the full system response.
This is different from our setting, where there is no control, and we only observe a limited number of snapshots.

Problems of network identification that are somewhat similar to ours have received wide interest in the literature recently.
In many cases, formulations based on the optimization of a sparsity-promoting cost function are used~\cite{Julius2009,Candes2008,Hayden2016}, or formulations based on sparse dictionary learning~\cite{Yuan2011,Brunton2016}.
A number of methods also employ spectral characteristics of the observed outputs to identify (aspects of) the network topology~\cite{Hayden2017,Materassi2012,Mauroy2017}.
In particular, the case of diffusive dynamics, similar to ours, is considered in~\cite{Shahrampour2013} and~\cite{Shahrampour2015}, which aim at inferring the network topology based on the (cross-) power spectral density of a consensus process driven by noise, and a node knockout strategy. 
Methods combining spectral identification with optimization techniques are studied in~\cite{Segarra2017,wai2016active, segarra_2017_network, shafipour_2017_network}. 

In all of the above works, however, the focus is on inferring the \emph{exact} network topology, i.e., the complete specification of the network in terms of its adjacency or Laplacian matrices.
In contrast, our goal is not to infer a specific network coupling, but rather to identify a statistical model thereof.
Stated differently, we wish to infer a distribution over networks that is compatible with the available observations.
This scenario is particularly relevant if our ability to reconstruct the network is limited by the amount of samples that we can obtain.
Closest to our approach is the conference paper~\cite{schaub_2019_spectral}, in which a related observational model is discussed, with a \emph{new adjacency matrix} drawn for each sample in~\eqref{eq:general_setup}, and which provides results about the asymptotic consistency of partition recovery.
In contrast, we consider the case of a constant but unknown adjacency matrix, provide non-asymptotic results in the number of samples and, in addition, we address the problem of parameter estimation.

Similar to this paper, the works~\cite{Wai2018,Wai2018a} study the `blind solution' of a ratio-cut problem (a common heuristic for partitioning graphs into dense, assortative clusters) based on observations of the output of a low-pass graph filter, excited by a low-rank excitation signal.
They provide results on how well the ratio cut-problem can be approximated on arbitrary, fixed graphs, under the assumption of a low-rank excitation structure.
In contrast, here we aim to infer an explicit generative graph model and can thus provide probabilistic performance guarantees about the recovery of the true planted partition.
The work by Hoffmann et al.~\cite{Hoffmann2018} and Peixoto~\cite{Peixoto2019} discuss hierarchical Bayesian models and inference schemes for related problems, but do not provide a theoretical analysis or statistical guarantees.
Other related problems include the study of multi-reference alignment and synchronization problems (see, e.g.,~\cite{Perry2019,Bandeira2017}), or the study of state compression in Markov chains~\cite{Zhang2018}.
See also~\cite{Nitzan2017} for an example in which network reconstruction is attempted based on dynamics, but without knowledge of temporal information.

Finally, the mathematical machinery that we use has connections to ideas from spectral methods for clustering on graphs and SBM recovery~\cite{vonLuxburg2007tutorial, rohe2011spectral, qin2013regularized,Abbe2018}, and is related to the statistics literature on so-called ``spiked'' random matrices, with a low-rank signal component.
In particular, our development relies on recent results regarding the efficiency of estimating low-rank covariance matrices~\cite{bunea2015sample}.

\subsection{Outline}
The rest of the paper is structured as follows.
After briefly reviewing relevant preliminaries and notation in Section~\ref{sec:preliminaries}, in Section~\ref{sec:problem_setup} we formally introduce the  identification problem that we will study.
We then present our main results and algorithmic ideas on recovering the latent \emph{partitions} of the SBM in Section~\ref{sec:algorithmic_ideas}.
Building on those insights, in~\Cref{sec:parameter_rec} we discuss procedures for learning the \emph{parameters} of the SBM.
\Cref{sec:theoretical_analysis} contains the proofs of our main technical results, and in~\Cref{sec:numerical_exp} we present computational experiments to numerically assess the performance of our methods.
We conclude with a brief discussion, and outline possible avenues for future work.

\section{Preliminaries}\label{sec:preliminaries}
\subsection{Networks} 
An undirected network $\mathcal G$ consists of a node set $\mathcal V$ of known cardinality $n$, and an edge set $\mathcal E$ of unordered pairs of elements of $\mathcal V$.
The edges can be conveniently represented as entries of a symmetric adjacency matrix $\bm A\in\{0,1\}^{n\times n}$, such that $A_{ij}=A_{ji} = 1$ if and only if $(i,j)\in\mathcal E$. 
We define the diagonal degree matrix $\bm{D} = \diag{(\bm{A}\bm 1)}$, where $\bm 1$ denotes the vector of all ones and $\text{diag}(\bm x)$ denotes the diagonal matrix whose entries are given by the components of the vector $\bm x$.
We define the normalized adjacency matrix as $\nA=\bm{D}^{-1/2}\bm{A}\bm{D}^{-1/2}$.
Note that $\nA$ is a shifted version of the normalized Laplacian matrix of the graph, and is sometimes referred to in the literature simply as the ``normalized Laplacian''~\cite{rohe2011spectral,qin2013regularized}.

\subsection{The stochastic block model}
The SBM is a latent variable model that defines a probability measure over the set of undirected networks of fixed size $n$, represented by an adjacency matrix $\bm A \in \{0,1\}^{n\times n}$.
In an SBM, the network is assumed to be partitioned into $k$ nonempty groups of nodes, so that each node $i$ is endowed with a latent group label $g_i \in\{1,\ldots,k\}$.
Given these latent group labels, each link $A_{ij}$ is an independent Bernoulli random variable that takes value $1$ with probability $\Omega_{g_i,g_j}$ and value $0$ otherwise; that is, the probability of a link between two nodes depends only on their group labels:\footnote{For simplicity, our model allows self-loops.}
\begin{equation*}
     A_{ij} \sim \text{Ber}(\Omega_{g_i,g_j}).
\end{equation*}

To compactly describe the model, we collect link probabilities between the different groups in a symmetric affinity matrix $\bm \Omega = [\Omega_{ij}] \in [0,1]^{k \times k}$.
Furthermore, we define a partition indicator matrix $\bm G \in \{0,1\}^{n\times k}$, with entries $G_{ij} = 1$ if node $i$ is in group $j$, and $G_{ij}=0$ otherwise.
Based on these definitions, we can write the expected adjacency matrix under the SBM,  given the partition and the group labels as\footnote{Note the slight abuse of notation here with respect to the conditional expectation. We treat the partition indicator matrix $\bm G$ as a degenerate random variable with a deterministic, but unknown value.}
\begin{equation}\label{eq:low_rank}
    \bm{\mathcal{A}} := \mathbb{E}[\bm A \smid \bm G] = \bm G \bm \Omega \bm G^\top.
\end{equation}
As $\bm{\mathcal{A}}$ encodes all the parameters of an SBM, we will in the sequel refer to such a stochastic block model as $\mathcal M(\bm{\mathcal{A}})$.
Throughout the rest of the paper, and in order to avoid degenerate cases, 
we will always assume that $\bm{\mathcal{A}}$ has rank $k$.
In particular, this excludes the case where two rows in $\bm{\mathcal{A}}$ are  co-linear, i.e., the case where the connection probabilities of two groups  are identical or a linear scaling of each other.\footnote{Our main results rely on the analysis of a $k$-dimensional dominant subspace in the observed data, which is induced by the structure of $\bm{\mathcal{A}}$. If $\bm{\mathcal{A}}$ had rank $p<k$, then our arguments would need to be adjusted accordingly.}

The SBM leads to a network whose expected adjacency matrix has a low-rank block structure (rank $k$).
Due to the special structure induced by \Cref{eq:low_rank}, there exists a simple relationship between the partition indicator matrix $\bm G$ and the eigenvectors of the normalized version of the expected adjacency matrix $\nEA$, defined by
\begin{equation}
    \nEA := \mathbb E[\bm{D}]^{-1/2}\mathbb{E}[\bm A]\mathbb{E}[\bm{D}]^{-1/2} = \text{diag}(\bm{\mathcal A}\bm 1)^{-1/2} \bm{\mathcal{A}} \text{ diag}(\bm{\mathcal A}\bm 1)^{-1/2}.
\end{equation}
The relationship between $\bm G$ and the eigenvectors of $\nEA$ is given in Lemma~\ref{T:eigenvectors_and_partitions} below, and proved in the Appendix. 
We denote the eigenvalues of $\nEA$ by $\elambda_i$, and assume that they are sorted in order of decreasing magnitude, so that $|\elambda_1| \ge |\elambda_2| \ge \cdots \ge |\elambda_k| > |\elambda_{k+1}| = \cdots = |\elambda_n| = 0$. 
\begin{lemma}\label{T:eigenvectors_and_partitions}
    Consider the matrix $\bm V = [\bm v_1, \ldots, \bm v_k]$ of eigenvectors of $\nEA$ corresponding to the $k$ nonzero eigenvalues. Then,
    there exists a $k\times k$ orthogonal matrix $\bm{\mathcal U}$ such that
    \begin{equation}
    \bm V = \bm G (\bm G^\top \bm G)^{-1/2} \bm{\mathcal U}.
    \end{equation}
	Moreover, if we define the row-normalized eigenvector matrix $\widetilde{\bm V} = \diag(\bm V\bm V^\top)^{-1/2}\bm V$, then
    \begin{equation}
        \widetilde{\bm V} = \bm G \bm{\mathcal U}.
    \end{equation}
\end{lemma}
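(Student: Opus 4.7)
The plan is to show that both $\bm V$ and $\bm G(\bm G^\top \bm G)^{-1/2}$ are orthonormal bases of the same $k$-dimensional subspace, so that they must be related by a $k\times k$ orthogonal change of basis $\bm{\mathcal U}$, and then compute the row normalization explicitly.

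The key structural observation I would first establish is that within any given block, all nodes have the same \emph{expected} degree. Indeed, for a node $i$ in group $r$, the $i$-th row sum of $\bm{\mathcal A} = \bm G \bm\Omega \bm G^\top$ depends only on $r$ (it equals $\sum_s \Omega_{rs} n_s$, where $n_s$ is the size of group $s$). Writing $\bm\Delta$ for the $k\times k$ diagonal matrix of these block-wise expected degrees and $\bm D = \diag(\bm{\mathcal A}\bm 1)$, a direct entrywise check gives the commutation-type identity $\bm D^{-1/2} \bm G = \bm G \bm\Delta^{-1/2}$. Plugging this in, I can factor
\begin{equation*}
\nEA \;=\; \bm D^{-1/2} \bm G \bm\Omega \bm G^\top \bm D^{-1/2} \;=\; \bm G \bigl(\bm\Delta^{-1/2}\bm\Omega \bm\Delta^{-1/2}\bigr) \bm G^\top.
\end{equation*}

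Next, since $\bm G$ has linearly independent columns (each group is nonempty) and $\bm\Delta^{-1/2}\bm\Omega\bm\Delta^{-1/2}$ is $k\times k$ and invertible (by the standing rank-$k$ assumption on $\bm{\mathcal A}$, and hence on $\bm\Omega$), the column space of $\nEA$ coincides with the column space of $\bm G$. Because $\nEA$ is symmetric, this column space is exactly the orthogonal complement of its null space, i.e., the span of the eigenvectors $\bm v_1,\dots,\bm v_k$ associated with the nonzero eigenvalues. Meanwhile, $\bm W := \bm G(\bm G^\top\bm G)^{-1/2}$ is well defined since $\bm G^\top \bm G = \diag(n_1,\dots,n_k)$ is positive definite, and a direct computation $\bm W^\top \bm W = \bm I_k$ shows that $\bm W$ has orthonormal columns spanning the same subspace. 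Any two orthonormal bases of the same $k$-dimensional subspace are related by a $k\times k$ orthogonal matrix, giving the first claim $\bm V = \bm W \bm{\mathcal U}$.

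For the second part, I would use $\bm V = \bm W \bm{\mathcal U}$ with $\bm{\mathcal U}$ orthogonal to get $\bm V \bm V^\top = \bm W \bm W^\top$, hence $\diag(\bm V \bm V^\top) = \diag(\bm W \bm W^\top)$. The latter is straightforward: $\bm W$ has a single nonzero entry per row, namely $1/\sqrt{n_{g_i}}$ in column $g_i$, so the $i$-th diagonal entry of $\bm W \bm W^\top$ is $1/n_{g_i}$. Consequently $\diag(\bm V\bm V^\top)^{-1/2}$ rescales the $i$-th row of $\bm W$ by $\sqrt{n_{g_i}}$, which exactly converts the nonzero entry $1/\sqrt{n_{g_i}}$ into a $1$, yielding $\diag(\bm V\bm V^\top)^{-1/2}\bm W = \bm G$ and therefore $\widetilde{\bm V} = \bm G \bm{\mathcal U}$.

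There is no real analytical obstacle here; the whole argument is linear-algebraic and hinges on the elementary but crucial fact that $\bm D^{-1/2}$ acts on $\bm G$ by right-multiplication with a block-diagonal (actually $k\times k$ diagonal) matrix, which in turn is a consequence of degree-regularity within blocks in expectation. The mildly delicate point I would state carefully is the rank condition: one must note that the rank-$k$ assumption on $\bm{\mathcal A}$ forces $\bm\Omega$ to be nonsingular, so that the reduced $k\times k$ kernel $\bm\Delta^{-1/2}\bm\Omega\bm\Delta^{-1/2}$ is invertible and the column spaces of $\nEA$ and $\bm G$ truly agree.
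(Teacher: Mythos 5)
Your proof is correct, and it reaches the conclusion by a slightly different route than the paper. Both arguments pivot on the same structural facts: expected degrees are constant within blocks, so $\diag(\EA\bm 1)^{-1/2}$ acts on $\bm G$ by right-multiplication with a $k\times k$ diagonal matrix, yielding a factorization $\nEA = \bm G\,\bm M\,\bm G^\top$ with $\bm M$ an invertible symmetric $k\times k$ matrix. Where you diverge is in how the orthogonal matrix $\bm{\mathcal U}$ is produced. The paper diagonalizes the reduced matrix (its ``normalized affinity matrix'' $\bm\Gamma = \bm N_g^{1/2}\bm M\bm N_g^{1/2}$ with $\bm N_g=\bm G^\top\bm G$) and lifts its orthonormal eigenvectors $\bm u_i$ to eigenvectors $\bm v_i = \bm G\bm N_g^{-1/2}\bm u_i$ of $\nEA$, so $\bm{\mathcal U}$ is identified concretely as the eigenvector matrix of $\bm\Gamma$ and the nonzero eigenvalues of $\nEA$ are exhibited as those of $\bm\Gamma$. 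You instead argue abstractly: $\mathrm{col}(\nEA)=\mathrm{col}(\bm G)$ by the rank condition, this column space is the span of the top-$k$ eigenvectors by symmetry, and any two orthonormal bases of the same subspace differ by an orthogonal factor. Your version is marginally cleaner on one point the paper glosses over --- when $\nEA$ has repeated nonzero eigenvalues the matrix $\bm V$ is not unique, and the subspace argument handles any admissible choice of $\bm V$ uniformly --- while the paper's explicit lift buys the spectral correspondence $\lambda_i(\nEA)=\lambda_i(\bm\Gamma)$, which is reused elsewhere (e.g., in the eigenvalue-based parameter estimation of \Cref{sec:th_param_est}). The second half of your argument (computing $\diag(\bm V\bm V^\top)=\diag(\bm W\bm W^\top)$ with diagonal entries $1/n_{g_i}$) matches the paper's computation essentially verbatim. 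One cosmetic caution: you write $\bm D$ for $\diag(\EA\bm 1)$, whereas the paper reserves $\bm D$ for the realized degree matrix and uses $\mathbb{E}[\bm D]$ for its expectation; since you define your symbol explicitly this is not an error, but it clashes with the surrounding notation.
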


Given an SBM with an expected adjacency matrix $\bm{\mathcal{A}}$ as described above, let us denote the expected degree of node $i$ by $\delta_i := \mathbb{E}[\bm{D}_{ii}] = [\bm{\mathcal{A}}\bm 1]_i$, and let $\delta_\text{min}$ be the smallest expected degree.
We will restrict attention to SBMs with a sufficiently large $\delta_\text{min}$.

\begin{definition}[SBMs with concentration property]\label{def:SBMSconcentrate}
Given some $\epsilon>0$, we define $\mathcal M_n(\epsilon)$ as the class of SBMs $\mathcal{M}\left(\bm{\mathcal{A}}\right)$ with $n$ nodes, for which
\begin{equation*}
    \delta_\text{min}  > 27\ln\left(\frac{4n}{\epsilon} \right).
\end{equation*}
\end{definition}
The condition on the minimal degree essentially restricts $\mathcal M_n(\epsilon)$ to correspond to models such that all nodes that belong to the same group belong to the same connected component, with high probability.\footnote{{Note that for the special case of a single group, we are dealing with an Erd\H{o}s-R\'enyi random graph, and it is well known that if $\delta_\text{min} > c {\ln(n)}$, with $c>1$, the random graph is connected with high probability, as $n\rightarrow \infty$.}}
This is what will allow us to obtain reliable group membership  information from the link patterns. 
Indeed, without such an assumption, and even if we knew the adjacency matrix $\bm A$, the exact recovery (with high probability) of the partition would be impossible~\cite{Abbe2018}.

For the above class of models the following concentration result can be established.
\begin{lemma}[Concentration for normalized adjacency matrix]\label{T:concentration_A}
    Fix some $\epsilon >0$.
    Consider a normalized adjacency matrix $\bm L := \bm D^{-1/2}\bm A\bm D^{-1/2}$, computed from an adjacency matrix $\bm A$ generated according to an SBM model $\mathcal M(\bm{\mathcal{A}})$.
    If $\mathcal M(\bm{\mathcal A}) \in \mathcal M_n(\epsilon)$, then the following bound holds with probability at least $1-\epsilon$:
    \begin{equation*}
        \left \|\nA -\nEA\right \| \le 3 \sqrt{\frac{3 \ln(4n/\epsilon)}{\delta_\text{min}}},
    \end{equation*}
    where we recall that ${\nEA := \mathbb E[\bm{D}]^{-1/2}\mathbb{E}[\bm A]\mathbb{E}[\bm{D}]^{-1/2}}$.
\end{lemma}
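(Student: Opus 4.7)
The plan is to prove this bound by the classical route used by Oliveira and Chung--Radcliffe for the normalized Laplacian of independent-edge random graphs: reduce to two independent concentration facts, one scalar (Chernoff for individual degrees) and one matricial (concentration of $\bm A$ around $\mathbb{E}[\bm A]$), then combine them through a telescoping decomposition of $\bm L - \bm{\mathcal L}$. I would split the error budget $\epsilon$ into $\epsilon/2 + \epsilon/2$, one half for each ingredient.

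First I would establish degree concentration. Each $D_{ii}=\sum_j A_{ij}$ is a sum of independent Bernoullis with mean $\delta_i \ge \delta_\text{min}$, so a multiplicative Chernoff bound gives
\begin{equation*}
\Pr\bigl(|D_{ii}-\delta_i|\ge t\,\delta_i\bigr)\le 2\exp\bigl(-t^2\delta_i/3\bigr).
\end{equation*}
Choosing $t=\sqrt{3\ln(4n/\epsilon)/\delta_i}$ and taking a union bound over the $n$ vertices yields, with probability at least $1-\epsilon/2$, the event
\begin{equation*}
\mathcal E_1:\quad \max_i\Bigl|\tfrac{D_{ii}}{\delta_i}-1\Bigr|\le \eta,\qquad \eta:=\sqrt{\tfrac{3\ln(4n/\epsilon)}{\delta_\text{min}}}.
\end{equation*}
The hypothesis $\delta_\text{min}>27\ln(4n/\epsilon)$ from Definition~\ref{def:SBMSconcentrate} guarantees $\eta\le 1/3$, which keeps the multiplicative form of Chernoff valid and makes the perturbation bounds below well conditioned.

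Second I would apply a matrix concentration inequality to the centered, normalized adjacency matrix. Writing $\bm A-\mathbb{E}[\bm A]=\sum_{i\le j}\xi_{ij}(\bm e_i\bm e_j^\top+\bm e_j\bm e_i^\top)$ as a sum of independent, centered, bounded Hermitian matrices with variance parameter controlled by $\delta_\text{max}$, a matrix Bernstein / Chung--Radcliffe bound yields, with probability at least $1-\epsilon/2$,
\begin{equation*}
\mathcal E_2:\quad \bigl\|\mathbb{E}[\bm D]^{-1/2}(\bm A-\mathbb{E}[\bm A])\mathbb{E}[\bm D]^{-1/2}\bigr\|\le c\,\eta,
\end{equation*}
for a small absolute constant $c$. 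This is the step where the factor $\sqrt{3\ln(4n/\epsilon)/\delta_\text{min}}$ appears naturally, since the Bernstein bound scales like $\sqrt{\ln(n/\epsilon)/\delta_\text{min}}$.

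Finally I would combine the two ingredients through the telescoping identity
\begin{equation*}
\bm L-\bm{\mathcal L}=\mathbb{E}[\bm D]^{-1/2}(\bm A-\mathbb{E}[\bm A])\mathbb{E}[\bm D]^{-1/2}+\bigl(\bm D^{-1/2}\bm A\bm D^{-1/2}-\mathbb{E}[\bm D]^{-1/2}\bm A\mathbb{E}[\bm D]^{-1/2}\bigr).
\end{equation*}
On $\mathcal E_1$, set $F_{ii}=\sqrt{\delta_i/D_{ii}}-1$, so $\bm D^{-1/2}=\mathbb{E}[\bm D]^{-1/2}(\bm I+\bm F)$ with $\|\bm F\|\le \eta/(1-\eta)$. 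Expanding the second bracket and using $\|\mathbb{E}[\bm D]^{-1/2}\bm A\mathbb{E}[\bm D]^{-1/2}\|\le 1+\|\mathbb{E}[\bm D]^{-1/2}(\bm A-\mathbb{E}[\bm A])\mathbb{E}[\bm D]^{-1/2}\|$ (together with $\|\bm{\mathcal L}\|\le 1$) bounds that bracket by $\mathcal{O}(\eta)$ on $\mathcal E_1\cap\mathcal E_2$. A triangle inequality, followed by a union bound on $\mathcal E_1$ and $\mathcal E_2$, gives the stated bound $3\eta$ with probability at least $1-\epsilon$.

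The main obstacle is not conceptual but arithmetic: getting the leading constant to be exactly $3$ requires tight bookkeeping of the Chernoff constants, of the constant in the matrix Bernstein step, and of the cross-terms produced when expanding $(\bm I+\bm F)\bm{\mathcal L}(\bm I+\bm F)$. The cleanest route is to invoke directly a sharp version of the Chung--Radcliffe/Oliveira concentration inequality for $\bm L-\bm{\mathcal L}$ (whose constant is $3$ under the regime $\delta_\text{min}\gtrsim \ln(n/\epsilon)$), which subsumes both $\mathcal E_1$ and $\mathcal E_2$ into a single high-probability statement rather than combining them by hand.
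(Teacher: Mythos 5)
Your proposal is correct and follows essentially the same route as the paper: split $\epsilon$ in half, use a Chernoff bound on the degrees $D_{ii}$ together with a union bound to control the degree-renormalization error, use a matrix Bernstein (Chung--Radcliffe) inequality to control $\|\mathbb{E}[\bm D]^{-1/2}(\bm A-\mathbb{E}[\bm A])\mathbb{E}[\bm D]^{-1/2}\|$, and combine the two via the same triangle-inequality decomposition through the intermediate matrix $\mathbb{E}[\bm D]^{-1/2}\bm A\,\mathbb{E}[\bm D]^{-1/2}$. The paper's proof is exactly this argument with the explicit constant bookkeeping you flag as the remaining work (choosing $a=\sqrt{3}\,t_0$ in the Bernstein step and bounding the cross-terms by $t_0^2+2t_0\le 2a$ using $\delta_\text{min}>27\ln(4n/\epsilon)$).
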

\Cref{T:concentration_A} follows from a result by Chung and Radcliffe~\cite{chung2011spectra}. 
The proof, which parallels~\cite{chung2011spectra}, is given in the Appendix for completeness.
In the following, we will be primarily interested in  the case where $\delta_\text{min} = \omega\left(\ln(n/\epsilon)\right)$. In this case, when $n$ is large, the condition 
in~\Cref{def:SBMSconcentrate} will be satisfied.\footnote{More precisely, the condition  $\delta_\text{min} = \omega\left(\ln(n/\epsilon)\right)$ means that $\delta_\text{min}/{\ln(n/\epsilon)} \rightarrow \infty$, as $n\rightarrow \infty$.}
\Cref{T:concentration_A} then guarantees a tight spectral concentration for large networks, i.e., that  $\left \|\nA -\nEA\right \|$ will be small, with high probability.

\subsection{Sub-Gaussian random vectors}
For convenience, we recall here the definition of sub-Gaussian random variables and vectors.
\begin{definition}[Sub-Gaussian random variables and vectors]
A zero-mean random variable $x\in \mathbb{R}$ is \emph{sub-Gaussian} if there exists a constant $\sigma > 0$ such that $\mathbb{E}[\exp(tx)] \le \exp(t^2\sigma^2/2)$, for all $t\in \mathbb{R}$.
A zero-mean random vector $\bm x \in \mathbb{R}^n$ is sub-Gaussian if for any non-random $\bm u \in \mathbb{R}^n$, the random variable $z = \bm u^\top \bm x$ is sub-Gaussian.
\end{definition}

Note that any Gaussian random variable or vector is sub-Gaussian.
We also define the sub-Gaussian norm as follows.
\begin{definition}[Sub-Gaussian norm~\cite{bunea2015sample}]
    The sub-Gaussian norm of a sub-Gaussian random variable $x$ is $\|x\|_{\Psi_2} := \sup_{k\ge 1}k^{-1/2}(\mathbb{E}|x|^k)^{1/k}$.
    For a sub-Gaussian random vector $\bm x$, the sub-Gaussian norm $\|\bm x\|_{\Psi_2}$ is defined as ${\|\bm x\|_{\Psi_2} := \sup_{\bm u \in \mathbb{R}^n \backslash \{0\} }\|\bm u^\top \bm x\|_{\Psi_2}/ \|\bm u\|}$.
\end{definition}

\section{The blind identification problem}\label{sec:problem_setup}
In this section, we introduce the specific inference problem that we will study.
We first provide a brief discussion on the class of dynamical models we consider, and then describe the precise problem setup.

\subsection{Dynamical model}
We consider the dynamical model described in~\cref{eq:general_setup}, where the mapping $f$ is of the form
\begin{equation}\label{eq:function_class}
    f: \bm A \mapsto \sum_{\ell=1}^{R} \alpha_{\ell} \nA^\ell,
\end{equation}
for some known constants $\alpha_{\ell}$, and a known degree $R$,
where $\nA =\bm D^{-1/2}\bm A\bm D^{-1/2}$ is the (unknown) normalized adjacency matrix of the (unobserved) underlying graph. 
We assume that the system matrix satisfies $\|f(\bm A)\| \le 1$, and corresponds to a (marginally) stable dynamical system.

The above mapping $f$ can be considered to describe some local interactions between the nodes, in that it allows for interactions between nodes that are up to $R$ steps away in the graph.
The mapping \cref{eq:function_class} $f$ may alternatively be considered from the perspective of graph signal processing~\cite{Ortega2018,mateos_2018_connecting} as a graph filter of order $R$, that acts (instantaneously) on the state variables of the dynamical system \cref{eq:general_setup} at each time step.

To make our discussion more concrete we will in the following concentrate on the dynamical system
\begin{equation}\label{eq:observation_model_simple}
        \bm x_{t+1}  = \nA\bm{x}_{t},
\end{equation}
as our main example.
This discrete-time system, and by extension the dynamics in~\eqref{eq:observation_model}, is closely related to diffusion processes on networks~\cite{masuda2017random, segarra_2017_filters}. 
Indeed, $\nA$ is related via a similarity transformation to the (column-stochastic) transition matrix $\bm{A}\bm{D}^{-1}$ (or diffusion kernel) of an unbiased random walk on the network.
Such random walk dynamics have been used to model a variety of processes, such as the spreading of rumors and opinions, and have found applications in other areas of network analysis such as ranking nodes or detecting communities~\cite{masuda2017random}.
Another interpretation of the model is in terms of (discrete-time) consensus processes, which are distributed averaging processes that have been considered extensively in the Control Theory literature~\cite{tsitsiklis1984problems,jadbabaie2003coordination,olfati2007consensus}.

It is well known that $\|\bm A\bm D^{-1}\| = 1$  and accordingly $\|\bm L \| = 1$, since $\bm L$ and $\bm A \bm D^{-1}$ are related via a similarity transform.
Hence, the assumption $\|f(\bm A)\|\le 1$ is fulfilled by our example dynamics. 
We will make use of this fact in our calculations later.

\begin{remark}
    By allowing for a large enough $R$ we can view the observation $\bm x_T$ at time $T$ as a signal $\bm x_0$ filtered by a graph filter with mapping $f'(\bm A)$, where $f'(\bm A)$ is obtained by composing $T$ copies of the original function $f$. However, we refrain from considering this `static' filter perspective here, as this would disguise a trade-off between the sampling time and the number of samples required for our identification problem, which will become apparent in the following sections.
\end{remark}

\subsection{Problem description}
Let us provide a detailed breakdown of the observation model we consider.
\begin{enumerate}
    \item We have a discrete-time dynamical system with state vector $\bm x_t \in \mathbb{R}^n$, which evolves according to
    \begin{equation}\label{eq:observation_model}
        \bm x_{t+1}  = f(\bm A)\bm{x}_{t},
    \end{equation}
    where $f$ is of the form~\cref{eq:function_class}.
    
\item We assume that the normalized adjacency matrix $\nA$ that appears in $f(\bm A)$ is generated by first drawing a graph from an SBM with $k$ blocks, (unknown) partition indicator matrix $\bm G$, and some (unknown) affinity matrix $\bm \Omega$, and then normalizing its adjacency matrix $\bm A$ to obtain $\nA$ and $f(\bm A)$.

    \item We obtain $s$ samples $\bm x_T^{(i)}$, $i=1,\ldots, s$, at a known time $T$, associated to $s$ different trajectories.
  
    \item The (unobserved) initial vectors $\bm{x}_0^{(i)}$ for the different samples are i.i.d., drawn from a  zero-mean sub-Gaussian vector distribution, where the components of the vector are uncorrelated, i.e., $\mathbb{E}[\bm x_0^{(i)} [\bm x_0^{(i)}]^\top] = \bm I$.
  
\end{enumerate}

Under this setup, we consider the following two problems, which we refer to as the \emph{partition recovery} and  \emph{parameter estimation} problems. 
\begin{problem}{(SBM partition recovery from dynamics).}
    Given $s$ independent samples, each one drawn at a fixed time $T$ according to the above described model, recover the latent partition of the nodes into blocks.  
\end{problem}

\begin{problem}{(SBM parameter estimation from dynamics).}
Given $s$ independent samples, each one drawn at a fixed time $T$ according to the above described model, recover the affinity matrix $\bm \Omega$ describing the link probabilities in the underlying SBM.
\end{problem}

Note that, as the assignment of labels to groups is arbitrary, we can only hope to recover the block structure and the affinity matrix $\bm \Omega$ up to a permutation of the group labels.
Similarly, as will become clear in the following sections, the parameters of the SBM will in general not be identifiable, unless some additional assumptions are made on the underlying model or some additional side information is available on the structure of the network.

Moreover, it should be apparent that the difficulty of the above problems will depend on the specific realization of the initial conditions $\bm x_0^{(i)}$ and of the adjacency matrix $\bm A$. 
Thus, we cannot expect accurate inference for every possible realization.
Instead we look for algorithms and results that, under certain assumptions, guarantee accurate inference with high probability.

We make the following technical assumption about the data generation process. 
\begin{assumption}[Bounded moments assumption]\label{assumption2}
    We assume that there exists an absolute constant  $c_0>0$ such that $\mathbb{E}[(\bm u^\top\bm x_0^{(i)})^2]\ge c_0 \|\bm u^\top \bm x_0^{(i)}\|^2_{\Psi_2}$, for all $\bm u$.
\end{assumption}
As discussed in~\cite{bunea2015sample}, Assumption~\ref{assumption2} effectively bounds all moments of $\bm x_0^{(i)}$ as a function of the second moment of $\bm x_0^{(i)}$.
As a special case, this assumption is satisfied by a standard Gaussian vector, with $c_0 = \pi/2$~\cite{bunea2015sample}.
\Cref{assumption2} will enable us to invoke (via~\Cref{T:boundSampleCovarianceToCovarianceBunea}) the results of~\cite{bunea2015sample} on the estimation of low-rank covariance matrices, thereby facilitating our analysis.

\begin{remark}
If the number of available samples becomes large ($s\rightarrow \infty$), we could in principle try to infer the exact adjacency matrix $\bm A$. 
However, we are interested in a situation where $s$ is comparatively small ($s \ll n$), and our goal is to infer the SBM model that generated $\bm A$ (and thus $\nA$), that is, the latent partition and the affinity matrix $\bm \Omega$.
\end{remark}

\section{Partition recovery}\label{sec:algorithmic_ideas}
In this section we first present our main results and an algorithm for solving the partition recovery problem (Problem 1) and then provide a broader discussion of the underlying algorithmic ideas.

\subsection{Main results}
Our algorithm for partition recovery relies on spectral clustering, and is displayed as \Cref{alg}.
\begin{algorithm}[t]
    \caption{\vspace{1pt}Partition recovery}%
    \label{alg}
    \begin{algorithmic}[1]
    \item[]{\textbf{Input:} Samples $\{\bm x_T^{(i)}\}_{i=1}^s$, Number of groups $k$}
    \item[]{\textbf{Output:} Partition $\mathcal P$}
  \STATE{Compute sample covariance $\sCov := \frac{1}{s}\sum_{i=1}^s [\bm x_T^{(i)} - \bm{\bar x}_T][\bm x_T^{(i)}- \bm{\bar x}_T]^\top$}
  \STATE{Compute top $k$ eigenpairs: $\{(\lambda_i,\bm w_i)\}_{i=1}^k \leftarrow \text{eig}(\sCov)$, where $|\lambda_1| \ge |\lambda_2| \ge \cdots \geq |\lambda_k|$}
  \STATE{Form the matrix $\bm W = [\bm w_1,\ldots,\bm w_k]\in \mathbb{R}^{n\times k}$ and normalize its rows to have unit norm: $\widetilde{\bm W} = \diag(\bm W \bm W^\top)^{-1/2}\bm W$}
  \STATE{Perform $k$-means clustering on the rows of $\widetilde{\bm W}$: $\mathcal P \leftarrow \text{$k$-means}(\widetilde{\bm W}^\top)$}
  \end{algorithmic}
\end{algorithm}
It builds upon the idea that the \emph{sample covariance matrix} $\sCov$ at time $T$, defined by
\begin{equation}\label{eq:sample_cov}
    \sCov := \frac{1}{s}\sum_{i=1}^s [\bm x_T^{(i)} - \bm{\bar x}_T][\bm x_T^{(i)}- \bm{\bar x}_T]^\top,
\end{equation}
where $\bm{\bar x}_T = \frac{1}{s}\sum_{i=1}^s \bm x_T^{(i)}$ is the sample mean, can serve as a proxy for what we call the \emph{ensemble covariance matrix} 
\begin{equation}\label{eq:ensemble}
\bm S(T):= 
f(\EA)^{2T}.
\end{equation}
Because the initial vector has identity covariance, it is seen that $\bm S(T)$ would be the covariance matrix of the vector $\bm x_T^{(i)}$ if the  dynamics were evolving according to the matrix $f(\EA)$, i.e., if we were to replace the adjacency matrix of the sampled network with its expected value.
Intuitively, the matrix $f(\EA)$ encodes the partition structure of the SBM, and since $\bm S(T)$ is simply a matrix power of $f(\EA)$, it has the same dominant eigenspace.
Being able to compute a close proxy for $\bm S(T)$ enables us to gain information about the partition structure. 
Thus, as long as the sample covariance matrix $\sCov$ is close to the ensemble covariance matrix $\bm S(T)$, we can expect good recovery performance. 

To gain some first insight, Theorem~\ref{T:main_result} below addresses the relation between these two covariance matrices, for our example dynamics $f(\bm A) = \nA$.
\begin{theorem}\label{T:main_result}
    Let $\nA$ be the normalized adjacency matrix constructed from a graph drawn from an SBM $\mathcal M(\bm{\mathcal{A}}) \in \mathcal M_n(\epsilon)$ with $n$ nodes and minimal degree $\delta_\text{min}$.
    Let $\bm x_0^{(i)}$ $i=1,\ldots,s$, be i.i.d. zero-mean sub-Gaussian vectors that satisfy~\Cref{assumption2}, with covariance matrix $\bm I$.
    
    Then, for the dynamics of the form \cref{eq:observation_model_simple}, and with probability at least $1-5s^{-1}-2\epsilon$, we have:
    \begin{equation}\label{E:rate_main_result}
        \left\|\sCov - \bm S(T) \right \| \le 2T M + B(T),
    \end{equation}
    where  $\sCov$ is defined in \eqref{eq:sample_cov}, $\bm S(T)$ is defined in \eqref{eq:ensemble},
    \begin{equation*}
        M := 3 \sqrt{\frac{3 \ln(4n/\epsilon)}{\delta_\text{min}}}, \qquad B(T) := C\left (1\!+\!\sum_{i=2}^k\min\{1, |\elambda_i| + M\}^{2T} +(n\!-\!k)M^{2T} \right) \sqrt{\frac{\ln s}{s}},
    \end{equation*}
    $C$ is some absolute constant, and where $\elambda_i$ are the eigenvalues of $\nEA$ ordered by decreasing magnitude.
\end{theorem}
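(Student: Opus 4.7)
The plan is to split the error into a \emph{structural} part, reflecting the discrepancy between the random graph $\bm L$ and its expectation $\bm{\mathcal{L}}$, and a \emph{statistical} part, reflecting the concentration of the sample covariance around its conditional expectation given the graph. Concretely, observe that since $\bm L$ is symmetric and $\mathbb{E}[\bm x_0^{(i)}(\bm x_0^{(i)})^\top] = \bm I$, the conditional covariance of $\bm x_T^{(i)}$ given $\bm A$ is $\bm\Sigma(T) := \bm L^{2T}$, while $\bm S(T) = \bm{\mathcal{L}}^{2T}$. The triangle inequality then gives
\begin{equation*}
\bigl\|\sCov - \bm S(T)\bigr\| \;\le\; \underbrace{\bigl\|\sCov - \bm L^{2T}\bigr\|}_{\text{statistical}} \;+\; \underbrace{\bigl\|\bm L^{2T} - \bm{\mathcal{L}}^{2T}\bigr\|}_{\text{structural}}.
\end{equation*}
The goal is to match the two summands to $B(T)$ and $2TM$ respectively.

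\paragraph{Structural term.} First I would handle $\|\bm L^{2T} - \bm{\mathcal{L}}^{2T}\|$ by the standard telescoping identity
\begin{equation*}
\bm L^{2T} - \bm{\mathcal{L}}^{2T} \;=\; \sum_{j=0}^{2T-1} \bm L^{j}\,(\bm L - \bm{\mathcal{L}})\,\bm{\mathcal{L}}^{2T-1-j}.
\end{equation*}
Since $\|\bm L\|\le 1$ and $\|\bm{\mathcal{L}}\|\le 1$ (both being similar to row-stochastic matrices), submultiplicativity of the operator norm gives $\|\bm L^{2T} - \bm{\mathcal{L}}^{2T}\| \le 2T\,\|\bm L - \bm{\mathcal{L}}\|$. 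Invoking \Cref{T:concentration_A} on the event $\mathcal{E}_1 := \{\|\bm L - \bm{\mathcal{L}}\|\le M\}$, which has probability at least $1-\epsilon$, immediately yields $\|\bm L^{2T} - \bm{\mathcal{L}}^{2T}\|\le 2TM$.

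\paragraph{Statistical term.} For the sample covariance error, conditional on the graph, $\bm x_T^{(i)} = \bm L^T \bm x_0^{(i)}$ is a zero-mean sub-Gaussian vector with covariance $\bm L^{2T}$, and the iid sub-Gaussian and bounded-moment hypothesis is preserved under the linear map $\bm L^T$ (up to the sub-Gaussian norm being controlled by $\|\bm L^T\|\le 1$). I would then apply the Bunea--Xiao type bound (\Cref{T:boundSampleCovarianceToCovarianceBunea} in the paper), which states that with probability at least $1-5/s$,
\begin{equation*}
\bigl\|\sCov - \bm L^{2T}\bigr\| \;\le\; C\, \tr(\bm L^{2T})\,\sqrt{\tfrac{\ln s}{s}}
\end{equation*}
(modulo absolute constants and lower-order terms). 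The remaining task is to show that $\tr(\bm L^{2T})$ is controlled by the bracket $1+\sum_{i=2}^{k}\min\{1,|\elambda_i|+M\}^{2T} + (n-k)M^{2T}$. Write $\tr(\bm L^{2T}) = \sum_{i=1}^{n}\lambda_i(\bm L)^{2T}$. The top eigenvalue of $\bm L$ equals $1$, contributing the leading $1$. Weyl's inequality combined with $\mathcal{E}_1$ gives $|\lambda_i(\bm L)|\le |\elambda_i| + M$ for $2\le i \le k$, and $|\lambda_i(\bm L)|\le M$ for $i>k$ (because $\bm{\mathcal{L}}$ has rank $k$); together with the universal bound $|\lambda_i(\bm L)|\le 1$, this yields exactly the claimed trace estimate and hence the desired $B(T)$.

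\paragraph{Combining.} Finally I would combine the two estimates via a union bound. The statistical bound is applied conditionally on $\bm A$, and once integrated yields the $5/s$ failure probability; the structural bound fails only on $\mathcal{E}_1^c$, of probability $\epsilon$. The extra $\epsilon$ in the statement ($2\epsilon$ rather than $\epsilon$) I expect to come from needing the event $\mathcal{E}_1$ a second time to translate the data-dependent trace bound $\tr(\bm L^{2T})$ into the deterministic bound in terms of the $\elambda_i$. The main obstacle I anticipate is precisely this step: carefully verifying that the sub-Gaussian constants in \Cref{assumption2} transfer to $\bm x_T^{(i)}$ with constants independent of $\bm A$, so that \Cref{T:boundSampleCovarianceToCovarianceBunea} applies conditionally with the absolute constant $C$, and then stitching together the two high-probability events without inflating the dependence on $\epsilon$ beyond the advertised $2\epsilon$.
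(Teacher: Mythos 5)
Your proposal is correct and follows essentially the same route as the paper: the same triangle-inequality split into $\|\sCov-\bm L^{2T}\|$ and $\|\bm L^{2T}-\bm{\mathcal L}^{2T}\|$, the telescoping bound $\|\bm L^{2T}-\bm{\mathcal L}^{2T}\|\le 2T\|\bm L-\bm{\mathcal L}\|$ combined with \Cref{T:concentration_A}, and the Bunea-type covariance bound with the effective rank $\tr(\bm L^{2T})$ controlled via Weyl's inequality on the event $\{\|\bm L-\bm{\mathcal L}\|\le M\}$. Your remark about the origin of the $2\epsilon$ (the concentration event being invoked twice) matches the paper's own footnote that a more careful union bound improves this to $\epsilon$.
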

The proof of~\Cref{T:main_result} is given in~\Cref{sec:concentration_of_covariance}.\footnote{The probability $1-5s^{-1}-2\epsilon$ in the statement of the theorem can be improved to
$1-5s^{-1}-\epsilon$ by a more careful use of the union bound in the proof. The same comment applies to \Cref{T:main_result_gen_f,T:result_misclassification} below.}

For some interpretation, note that the term $2TM$ in~\eqref{E:rate_main_result} is independent of the number of samples.
As we will see below, this term arises because the graph on which the dynamics takes place is only a proxy for the expected graph under the SBM, and is drawn only once.
We may think of this error term as a `baseline' insurmountable error that we will incur no matter how many samples we collect.
In light of the condition on $\delta_{\rm min}$ in Definition \ref{def:SBMSconcentrate}, we have $M<1$, but nevertheless the term $2TM$ in the error bound increases (linearly) with time.
Hence, the later we make our observations (larger $T$), the worse the first term in our bound  becomes.
From the perspective of our model~\eqref{eq:observation_model}, this is intuitive: since $\lambda_i(\nA) \le 1$ for all $i$, the relevant signal that we observe will be attenuated over time, making it more difficult to extract information about the embedded partition from our observations when $T$ is large.

In contrast to the first term, the second term $B(T)$ in \eqref{E:rate_main_result} depends both on the time $T$ and the number of samples $s$.
The sample dependence is related to the convergence rate of the sample covariance matrix $\sCov$ to the true covariance.
The time-dependence of $B(T)$ is again related to $M$, but in a non-linear fashion. 
Since $M < 1$, the error term $B(T)$ decreases with time.

The above discussion reveals a trade-off. If $T$ is large, then the first term, $2TM$, is large and dominates. If $T$ is small, then the second term, $B(T)$, can be large, especially if the number of samples, $s$, is moderate. In general, an intermediate value of $T$ will be yielding a most favorable upper bound.

It is also insightful to consider the behavior of our bound for both the asymptotic regime in which we have a large number of samples $s\rightarrow \infty$ (even though this is not the setting of interest to us), as well as in the limit of large networks ($n\rightarrow \infty$).
If we consider the case $s \rightarrow \infty$, and in the regime where $\delta_\text{min} = \omega(\ln(4n/\epsilon))$, then the error term is optimal, exactly in line with the requirements of ``exact recovery'' of the partition for an SBM where the network (the matrix $\bm A$) is observed~\cite{Abbe2018}.
Similarly, assuming a sufficiently large $\delta_\text{min}$ as $n\rightarrow\infty$, the dependency on the number of samples is aligned with the optimal rate for the estimation of low-rank covariance matrices~\cite{bunea2015sample}.

For the general case, where $f$ is given by~\cref{eq:function_class}, an analogous result can be established.
\begin{theorem}\label{T:main_result_gen_f}
    Let $\nA$ be the normalized adjacency matrix constructed from a graph drawn from an SBM $\mathcal M(\bm{\mathcal{A}}) \in \mathcal M_n(\epsilon)$ with $n$ nodes and minimal degree $\delta_\text{min}$.
    Let $\bm x_0^{(i)}$ $i=1,\ldots,s$, be i.i.d. zero-mean sub-Gaussian vectors that satisfy~\Cref{assumption2}, with covariance matrix $\bm I$.
    
    Then, for the dynamics of the form \cref{eq:observation_model}, and with probability at least $1-5s^{-1}-2\epsilon$, we have:
    \begin{equation}\label{E:rate_main_result_gen_f}
        \left\|\sCov - \bm S(T) \right \| \le 2T M_f + B_f(T),
    \end{equation}
    where  $\sCov$ is defined in \eqref{eq:sample_cov}, 
    \begin{align*}
        M_f &:= 3 \sqrt{\frac{3 \ln(4n/\epsilon)}{\delta_\text{min}}} \cdot \left(\sum_{\ell=1}^R \ell \alpha_\ell\right), \\
        B_f(T) & := C\left (1\!+\!\sum_{i=2}^k\min\{1,|\zeta_i|\!+\!M_f\}^{2T} +(n\!-\!k)M_f^{2T} \right) \sqrt{\frac{\ln s}{s}},
    \end{align*}
    $C$ is some absolute constant, and where $\zeta_i$ are the eigenvalues of $f(\EA)$ ordered by decreasing magnitude.
\end{theorem}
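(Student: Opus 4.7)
The plan is to mirror the proof of~\Cref{T:main_result}, with $\nA$ replaced by $f(\bm A)$ throughout and the telescoping bound applied one additional time. By the triangle inequality,
\begin{equation*}
\big\|\sCov - \bm S(T)\big\| \;\le\; \big\|\sCov - f(\bm A)^{2T}\big\| \;+\; \big\|f(\bm A)^{2T} - f(\EA)^{2T}\big\|,
\end{equation*}
where $f(\bm A)^{2T}$ is the covariance of $\bm x_T^{(i)}$ conditioned on $\bm A$ (using the symmetry of $f(\bm A)$ and the identity covariance of $\bm x_0^{(i)}$). The first summand is a \emph{sample-fluctuation} error and the second is a \emph{graph-fluctuation} error coming from the single random draw of $\bm A$.

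For the graph-fluctuation term I would use the standard telescoping identity $X^k - Y^k = \sum_{j=0}^{k-1} X^j(X-Y)Y^{k-1-j}$, which implies $\|X^k-Y^k\|\le k\|X-Y\|$ whenever $\|X\|,\|Y\|\le 1$. Applied monomial-by-monomial with $\|\nA\|,\|\nEA\|\le 1$,
\begin{equation*}
  \|f(\bm A)-f(\EA)\| \;\le\; \sum_{\ell=1}^R \alpha_\ell \,\|\nA^\ell-\nEA^\ell\| \;\le\; \Big(\sum_{\ell=1}^R \ell\,\alpha_\ell\Big)\|\nA-\nEA\|,
\end{equation*}
and~\Cref{T:concentration_A} then produces $\|f(\bm A)-f(\EA)\|\le M_f$ with probability at least $1-\epsilon$. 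A second application of the telescoping bound, using the standing assumption $\|f(\bm A)\|,\|f(\EA)\|\le 1$, upgrades this to $\|f(\bm A)^{2T}-f(\EA)^{2T}\|\le 2T\,M_f$, which is the first summand on the right-hand side of~\eqref{E:rate_main_result_gen_f}.

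For the sample-fluctuation term I would condition on $\bm A$ and observe that $\bm x_T^{(i)} = f(\bm A)^T\bm x_0^{(i)}$ are i.i.d.\ zero-mean sub-Gaussian vectors with covariance $f(\bm A)^{2T}$, and that~\Cref{assumption2} is preserved under the deterministic linear map $f(\bm A)^T$. This allows the covariance concentration result of~\cite{bunea2015sample} (invoked through~\Cref{T:boundSampleCovarianceToCovarianceBunea}) to be applied exactly as in the proof of~\Cref{T:main_result}, producing a high-probability bound expressed in terms of the $2T$-th powers of the eigenvalues of $f(\bm A)$. By Weyl's inequality these eigenvalues differ in magnitude from the $\zeta_i$ of $f(\EA)$ by at most $M_f$; since $f(\EA)$ has at most $k$ non-zero eigenvalues, the bottom $n-k$ eigenvalues of $f(\bm A)$ are each bounded by $M_f$, while for $2\le i\le k$ they are bounded by $\min\{1,|\zeta_i|+M_f\}$ (using also $\|f(\bm A)\|\le 1$ for the top eigenvalue). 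Summing these contributions with their $2T$-th powers reproduces $B_f(T)$ up to the absolute constant $C$, and a union bound over the two concentration events yields the stated probability $1-5s^{-1}-2\epsilon$.

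The only new ingredient relative to~\Cref{T:main_result} is the polynomial structure of $f$, which enters in exactly one place: the first telescoping step, where the factor $\sum_{\ell=1}^R \ell\,\alpha_\ell$ is produced and promotes $M$ to $M_f$ and $\elambda_i$ to $\zeta_i$ throughout the rest of the bound. I do not anticipate a genuine conceptual obstacle; the main point requiring care is verifying that~\Cref{assumption2} survives the linear transformation $f(\bm A)^T$ with an absolute constant independent of $\bm A$, so that~\Cref{T:boundSampleCovarianceToCovarianceBunea} can be invoked with the same constant $C$ as in~\Cref{T:main_result}.
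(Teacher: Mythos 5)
Your proposal follows the paper's proof exactly: the same triangle-inequality decomposition, the same monomial-by-monomial application of the telescoping bound (\Cref{T:ConcentrationPowersOfMatrix}) to obtain $\|f(\bm A)-f(\EA)\|\le M_f$ with probability at least $1-\epsilon$, and the same effective-rank/Weyl argument for the sample-fluctuation term with $M_f$ and $\zeta_i$ replacing $M$ and $\elambda_i$. Your closing caveat about \Cref{assumption2} being preserved under the deterministic map $f(\bm A)^T$ is a point the paper leaves implicit (it already arises in \Cref{T:main_result}), so it is a welcome extra check rather than a divergence in approach.
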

The proof of this result is given in \Cref{sec:proof_main_gen_f}.

Based on the concentration result in Theorem~\ref{T:main_result_gen_f}, we can establish the following guarantee for the partition recovery problem.
This result is phrased in terms of the misclassification rate $q:=|\mathbb{M}|/n$, where $\mathbb{M}$ is the set of ``misclassified nodes'' formally defined in ~\Cref{sec:implications_inference}.\footnote{Importantly, this definition of misclassified nodes assumes that we can solve the $k$-means procedure exactly.
In practice, however, only a $(1+\epsilon_0)$-approximation may be obtained efficiently~\cite{Kumar2004}.}
\begin{theorem}\label{T:result_misclassification}
    Let $\xi_1 \ge \cdots \ge \xi_k$ be the top $k$ eigenvalues of the ensemble covariance matrix $\bm S(T) = f(\EA)^{2T}$, which we assume to be positive.
    Let $\bm W$ and $\bm V$ denote the matrices formed by the top $k$ eigenvectors of $\sCov$ and $\bm S(T)$, respectively, and denote their row-normalized versions by $\widetilde{\bm W}$ and $\widetilde{\bm V}$.
    Define $\tau$ to be the minimum 2-norm of any row of $\bm{W}$ and $\bm{V}$.
    Then, with probability $1-5s^{-1}-2\epsilon$ the following bound for the misclassification rate $q$ holds:
    \begin{equation}
        q \le \frac{64k}{n\tau^2\xi_k^2}(2TM_f + B_f(T))^2,
    \end{equation}
    where $B_f(T)$ is as defined in~\Cref{T:main_result_gen_f}.
\end{theorem}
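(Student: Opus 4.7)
The plan is to chain three standard arguments: a Davis--Kahan bound on the top-$k$ eigenspace of the sample covariance, a Lipschitz translation of that bound to the row-normalized eigenvector matrix, and a $k$-means counting argument.

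First, since $\EA$ has rank $k$, so does $f(\EA)$, and $\bm S(T) = f(\EA)^{2T}$ is positive semidefinite with top $k$ eigenvalues $\xi_1 \ge \cdots \ge \xi_k > 0$ and remaining eigenvalues equal to zero; hence the spectral gap between the top-$k$ subspace and its complement is exactly $\xi_k$. The Davis--Kahan $\sin\Theta$ theorem therefore yields a $k\times k$ orthogonal matrix $\bm O$ with
\begin{equation*}
\|\bm W \bm O - \bm V\|_F \le \frac{2\sqrt{2k}}{\xi_k}\,\|\sCov - \bm S(T)\|.
\end{equation*}

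Second, I would translate this to the row-normalized matrices. The map $\bm x \mapsto \bm x/\|\bm x\|$ is Lipschitz with constant $2/\tau$ on the set of vectors with norm at least $\tau$, so applying it row-by-row and invoking the definition of $\tau$ gives $\|\widetilde{\bm W}\bm O - \widetilde{\bm V}\|_F \le (2/\tau)\|\bm W \bm O - \bm V\|_F$, and hence
\begin{equation*}
\|\widetilde{\bm W}\bm O - \widetilde{\bm V}\|_F^2 \le \frac{32 k}{\tau^2 \xi_k^2}\,\|\sCov - \bm S(T)\|^2.
\end{equation*}

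Third, I would invoke the standard spectral-clustering counting argument. By Lemma~\ref{T:eigenvectors_and_partitions}, $\widetilde{\bm V} = \bm G \bm{\mathcal U}$ with $\bm{\mathcal U}$ orthogonal, so rows of $\widetilde{\bm V}$ take only $k$ distinct values, one per block, and any two distinct values are orthonormal vectors at Euclidean distance $\sqrt{2}$. An exact $k$-means solution on the rows of $\widetilde{\bm W}$ cannot misclassify node $i$ unless the corresponding row of $\widetilde{\bm W}\bm O$ lies at distance at least $\sqrt{2}/2$ from its correct centroid; summing the squared deviations over misclassified nodes gives $|\mathbb{M}|/2 \le \|\widetilde{\bm W}\bm O - \widetilde{\bm V}\|_F^2$. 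Dividing by $n$ and chaining with Theorem~\ref{T:main_result_gen_f} on the event of probability $1 - 5s^{-1} - 2\epsilon$ yields
\begin{equation*}
q \le \frac{64 k}{n\tau^2\xi_k^2}\,(2TM_f + B_f(T))^2.
\end{equation*}

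The main obstacle is pinning down constants in each step so that the final factor lands at $64$, particularly in the counting step: the orthogonal matrix $\bm O$ returned by Davis--Kahan does not a priori respect the block structure, so one must argue that composing it with the (arbitrary) label permutation produced by $k$-means still places each row of $\widetilde{\bm W}\bm O$ near its correct row of $\widetilde{\bm V}$, and that misclassification can be quantified in the correct sense. Handling this identification cleanly, together with the footnoted caveat that $k$-means is solved exactly rather than approximately, is the only subtle point; the remaining ingredients are direct applications of the perturbation and concentration tools already established.
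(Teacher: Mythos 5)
Your architecture (Davis--Kahan on the top-$k$ eigenspace, a row-normalization Lipschitz step, then a counting argument against the $\sqrt{2}$-separated population centroids) is the same as the paper's, and your first two steps are fine --- indeed your normalization constant $2/\tau$ can be sharpened to $1/\tau$ via $\bigl\|\tfrac{x}{\|x\|}-\tfrac{y}{\|y\|}\bigr\|\le \tfrac{2\|x-y\|}{\|x\|+\|y\|}\le \tfrac{\|x-y\|}{\min(\|x\|,\|y\|)}$, which is what the paper uses in Lemma~\ref{T:Davis-Kahan}.

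The genuine gap is in your counting step. Misclassification (Definition~\ref{D:mis_nodes}) is a statement about the \emph{assigned $k$-means centroid} $\bm k_i$, not about the row $\widetilde{\bm w}_i$ itself: node $i$ is misclassified when $\bm k_i$ is closer to a wrong population centroid than to its own. Your claim that a misclassified node must have its row of $\widetilde{\bm W}\bm O$ at distance at least $1/\sqrt{2}$ from its correct population centroid does not follow pointwise --- a cluster returned by $k$-means can mix rows from two blocks, so that its centroid sits near the midpoint of two population centroids while each individual row is essentially on top of its \emph{correct} population centroid; such nodes are misclassified with $\|\widetilde{\bm w}_i\bm O - \bm\kappa_{g_i}\|\approx 0$. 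Ruling this out requires the $k$-means optimality inequality, which is exactly what the paper's Lemma~\ref{T:KmatrixVsV} supplies: $\|\bm K-\widetilde{\bm V}\bm{\mathcal Q}\|_F\le \|\bm K-\widetilde{\bm W}\|_F+\|\widetilde{\bm W}-\widetilde{\bm V}\bm{\mathcal Q}\|_F\le 2\|\widetilde{\bm W}-\widetilde{\bm V}\bm{\mathcal Q}\|_F$, after which Lemma~\ref{T:M2misclassified_nodes} counts $|\mathbb M|\le 2\|\bm K-\widetilde{\bm V}\bm{\mathcal Q}\|_F^2\le 8\|\widetilde{\bm W}-\widetilde{\bm V}\bm{\mathcal Q}\|_F^2$. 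Note the bookkeeping consequence: the paper reaches $64 = 8\cdot 8$ as (counting via centroids) $\times$ (Davis--Kahan with $1/\tau$), whereas your $64 = 2\cdot 32$ relies on the invalid direct counting; patching your counting step while keeping your $2/\tau$ would yield $256$, so you need both the centroid detour and the sharper normalization bound to land on the stated constant.
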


The proof is given in~\Cref{sec:proof_p_recovery}.
\Cref{T:main_result,T:main_result_gen_f,T:result_misclassification} provide performance guarantees that show the interplay between the number of groups $k$, the network size $n$, and the spectral properties of the network, as encapsulated in $\mu_k$ and $\|\sCov - \bm S(T)\|$.
Before continuing, let us add  a few remarks on some aspects of these results.

\begin{enumerate}
\item The time $T$ at which the samples are obtained is not a required input to the algorithm (cf. Algorithm~\ref{alg}).
However, as is apparent from~\Cref{T:main_result,T:main_result_gen_f,T:result_misclassification}, the partition recovery performance is influenced by the sampling time.
In particular, since the eigenvalues $\xi_k$ of the ensemble covariance matrix $f(\EA)^{2T}$ can only decrease with time, the bound we can obtain for the misclassification rate is weaker in comparison to the concentration result for the covariance matrix (\Cref{T:main_result,T:main_result_gen_f}).

\item In  Algorithm~\ref{alg}, we require the number of blocks $k$ to be an input to our algorithm.
In principle, we can try to infer this parameter from the data, e.g., by looking at some appropriate statistics of the eigenvalue gaps; 
see for example~\cite{xiang2008spectral,vonLuxburg2007tutorial,sanguinetti2005automatic}, or the discussion in~\cite{bunea2015sample} on the detection of eigenvalue jumps in covariance matrix estimation.
However, we will not carry out this line of analysis in this paper.

\item The same approach can be used for other types of (linear) operators derived from a low-rank statistical network model such as the SBM.
For instance one may consider inference for other latent graph models, such as degree-corrected SBMs~\cite{dasgupta2004spectral,Karrer2011}.
\end{enumerate}

\begin{remark}
    Note that in~\Cref{T:result_misclassification}, we assume that the first $k$ eigenvalues of $\bm S(T)$ are positive.
    Under the premise that we are dealing with a $k$-group SBM $\bm{\mathcal{A}}$ with rank $k$, this will be generically true. 
    However, if $\text{rank}(\bm{\mathcal{A}}) = p < k$, then the above Theorem would need to be adjusted, as only the first $p$ eigenvectors would carry information about the latent block structure, leading to possible non-identifiability. 
    The case $\text{rank}(\bm{\mathcal{A}}) < k$ could arise, for example,  if we consider a block model with equal-sized groups, in which two rows of the affinity matrix $\bm \Omega$ are co-linear.
    For a related discussion on some of these issues see, e.g.,~\cite{gulikers2017spectral}.
    Less trivial examples are also possible, e.g., if the rank of $\bm{\mathcal{A}}$ is $k$ but $\text{rank}(f(\bm{\mathcal{A}})) <k$.
\end{remark}

\subsection{Algorithm and proof ideas}
We now provide a more technical discussion of the proposed algorithm for partition recovery from a high-level perspective.
The proofs in~\Cref{sec:theoretical_analysis} will make these ideas mathematically precise.
To simplify our discussion below, we will describe the algorithmic ideas in terms of the simple dynamics $f(\bm A) = \bm L$.
The general case is completely analogous.

The intuition underpinning our algorithm is that while we cannot observe the edges of the network, the network structure affects the second moments of the observed samples in ways that we can exploit.

Our analysis relies on a comparison of three different covariance matrices: (a) the sample covariance matrix $\sCov$, defined in \eqref{eq:sample_cov}; (b) the ensemble covariance matrix $\bm S(T)$, defined in \eqref{eq:ensemble}; and (c) the covariance matrix $\Cov$, defined by
\begin{align}
    \Cov &:= \mathbb{E}_{\bm x_0}[\bm x_T \bm x_T^\top]- \mathbb{E}_{\bm x_0}[\bm x_T] \mathbb{E}_{\bm x_0}[\bm x_T^\top] = [\nA^T][\nA^T]^\top = \nA^{2T},
\end{align}
where the first equality uses the zero-mean property of $\bm x_T$, and the last equality uses the symmetry of $\nA$.

As a first step, we establish that the sample covariance $\sCov$ serves as a good estimator of  $\Cov$.  This is done by leveraging the fact
that the effective rank $r_e(\Cov) = \text{trace}(\Cov) / \|\Cov\|$ of  $\Cov$ is low, and then showing that $\sCov$ indeed concentrates around $\Cov$, even when the number of samples is relatively small (\Cref{T:boundSampleCovarianceToCovariance}).

Second, we observe that the eigenvectors of $\Cov$ are the same as the eigenvectors of $\nA$, although they now correspond  to the modified eigenvalues $\lambda_i(\Cov) = \lambda_i^{2T}(\nA)$.
This transformation does not change the ordering of the eigenvalues in terms of their absolute values and thus the dominant eigenspaces of $\Cov$ and $\nA$ are the same. In particular this implies that the dominant eigenspace of $\Cov$ allows us to approximate the dominant eigenspace of $\nA$.

Finally, we show that $\nA$ concentrates around $\nEA$ (cf.~\Cref{T:concentration_A}), so that $\Cov$ also concentrates around $\bm S(T)$. In particular, the dominant eigenspace of $\Cov$ allows us to approximate the dominant eigenspace of $\bm S(T)$ and of $\nEA$, which in turn allows us to estimate the partition structure.

From the above discussion we see that~\Cref{alg} is subject to two sources of randomness: (a) the normalized adjacency matrix $\nA$ is an approximation of its expected version $\nEA$; and (b)  
the sample covariance matrix $\sCov$ is an approximation of the true covariance matrix $\Cov$. 
\Cref{T:main_result,T:main_result_gen_f} allow us to bound these two effects, leading to the performance guarantees in Theorem~\ref{T:result_misclassification}.

\begin{remark}[Computational considerations]
In \Cref{alg}, we first form the sample covariance and then analyze its spectral properties via an eigenvalue decomposition.
Equivalently, we can employ an SVD of the normalized and centered sample matrix $\bm X = \frac{1}{\sqrt{s}}[\bm{\widetilde{x}}_T^{(1)},\ldots,\bm{\widetilde{x}}_T^{(s)}]$, where $\bm{\widetilde{x}}_T^{(i)} = \bm x_T^{(i)} - \bm{\bar{x}}_T$. 
Using the SVD $\bm X = \bm U \bm \Theta  \bm Q^\top$ 
it is easy to see that $\sCov = \bm X \bm X^\top = \bm U \bm \Theta^2 \bm U^\top $ and hence all the information required by the algorithm is contained in the SVD of $\bm X$.
While mathematically equivalent, working directly with the SVD of $\bm X$ avoids forming $\sCov$ explicitly, and thus reduces the memory requirements when $s \ll n$.
\end{remark}

\section{Parameter estimation}\label{sec:parameter_rec}
In this section we turn to the parameter estimation problem.
We first provide an overview of the general algorithmic idea with the help of an example.
We then discuss certain obstacles to its solution, including some insurmountable non-identifiability issues.
In \Cref{sec:parameter_algo} we then provide a generic ``meta''-algorithm for the solution of the parameter identification problem. Finally, in \Cref{sec:th_param_est},
we provide some theoretical guarantees for its performance, for a planted partition model, which is a special case of the SBM.

\subsection{Algorithm ideas}
To estimate the \emph{parameters} of the underlying SBM we can use a similar strategy as for the recovery of the \emph{partitions}. 
The ensemble covariance matrix $\bm S(T)= f(\EA)^{2T}$ is a transformation of the normalized adjacency matrix.
Hence, it contains relevant information not only about the partitions, but also about the model parameters.
Let us illustrate this fact with our example dynamics $f(\bm A) = \nA$ on a graph drawn from of a planted partition model, a simple form of an SBM.

\begin{example}\label{ex:example1}
	Consider a special case of the SBM, known as the planted partition model.
    Conditional on the group labels, the planted partition model with $n$ nodes is described by probabilities $a/n$ and $b/n$ (with $a \neq b$) of linking to a node in the same group or a different group, respectively.\footnote{In~\Cref{T:parameter_estimation_result} below, we will assume that $a, b$ and $n$ are such that $\mathcal{M}(\bm{\mathcal{A}}) \in \mathcal{M}_n(\epsilon)$. In the context of this example, this means $a$ and $b$ are assumed to be of order at most $\ln(n)$.}
	The affinity matrix for such a model, with $k$ blocks, is of  the form $\bm \Omega = \frac{(a-b)}{n}\bm I_k + \frac{b}{n}\bm 1^{}_k\bm 1_k^\top$, where $\bm I_k$ and $\bm 1_k$ denote the $k$-dimensional identity matrix and the vector of all ones, respectively.
    Let us consider such a planted partition matrix with equally-sized groups.
	This leads to an expected matrix $\nEA$ with entries
	\begin{equation*}
	[\nEA]_{ij} = 
	\begin{cases}
	\frac{a}{n}\cdot\frac{k}{a+(k-1)b}, & \text{if } g_i = g_j,\\
	\frac{b}{n}\cdot\frac{k}{a+(k-1)b},& \text{if } g_i \neq g_j,
	\end{cases}
	\end{equation*}
	where, we recall, $g_i$ represents the latent group label of node $i$.
	
	Based on these calculations we can find expressions for ${\bm S}(T)=\nEA^{2T}$:
	\begin{equation}\label{E:example_entries}
	[\nEA^{2T}]_{ij} = 
	\begin{cases}
	\frac{k-1}{n}\left(\frac{a-b}{a+(k-1)b}\right)^{2T} + \frac{1}{n} & \text{if } g_i = g_j,\\
	-\frac{1}{n}\left(\frac{a-b}{a+(k-1)b}\right)^{2T} + \frac{1}{n} & \text{if } g_i \neq g_j.
	\end{cases}
	\end{equation}
\end{example}

This example suggests the following generic strategy for the parameter recovery problem.
First, we can derive the functional dependencies of the entries of $\bm S(T)$ on the latent parameters.
Let us denote these relations by $F(\bm \Omega, \bm G) = \bm S(T)$. 
In the above example, $F(\bm \Omega, \bm G)$ is given by~\Cref{E:example_entries}.
Note that it is possible to derive such relationships $F(\bm \Omega, \bm G) = \bm S(T)$ for any SBM with generic parameters $\bm \Omega$ and $\bm G$, even though the functional relationships will be more complicated. 
Second, in principle, we can then use the sample covariance matrix $\sCov$ as an approximation of $\bm S(T)$ and estimate the parameters by leveraging the functional relationships $F(\bm \Omega, \bm G) \approx \sCov$.
However, there are some difficulties to overcome to render this general idea operational.

First, the derived equations might be insufficient to identify the model parameters.
Indeed, notice that the equations in~\eqref{E:example_entries} do not enable us to identify both model parameters $a$ and $b$ of the planted partition model.
This is an artifact of the symmetry properties of the normalized adjacency matrix $\bm L$.
In particular, this degeneracy arises from an invariance in the dynamical model~\eqref{eq:observation_model}. 
Indeed, if in \Cref{ex:example1} we replace $(a,b)$ by $(\gamma a,\gamma b)$, where $\gamma$ is a positive constant, the expected matrix $\nEA$ remains the same. 
Thus the model parameters are not recoverable, even if we had access to the exact matrix $\nEA$.
In the context of our specific model~\eqref{eq:observation_model}, we thus have to make some additional assumptions to obtain a viable parameter estimation approach.\footnote{For a planted partition model with two equally-sized blocks there is additional symmetry in the problem in that we can exchange $(a,b)$ with $(b,a)$ and obtain exactly the same set of equations.
In this case an additional assumption, e.g., of the form $a > b$ is thus necessary for identifiability.}
We remark that for other dynamical models such additional assumptions may not be necessary.

A second issue is that the obtained equations will be typically inconsistent if we just plug in the sample covariance matrix. Thus,
instead of simply inverting the functional relations $F(\bm \Omega, \bm G) \approx \sCov$, we will have to find approximate solutions.
There are many ways of obtaining an approximate solution in such cases, e.g., by formulating an optimization problem $\min_{\bm \Omega} \|F(\bm \Omega, \bm G) - \sCov\|$.
Which approach would have better statistical properties in the face of the noise present in $\sCov$ is an interesting question.
We do not go into such details, but provide instead a generic ``meta''-algorithm in the next section.

A third possible issue with the above approach is that it requires the knowledge of the group labels.
In the context of Example~\ref{ex:example1}, based on~\eqref{E:example_entries} we can estimate the parameters $a$ and $b$ directly from the entries of $\bm S(T)$ -- or rather its estimated version $\sCov$ -- but only if the group labels $g_i$ of the nodes are known.
This implies that we would have to first recover the latent partition before we can estimate the parameters based on these relations.
Although this is a reasonable procedure in practice, the recovery performance will depend strongly on how good the partition recovery is in the first place.

However, in certain cases we can avoid the latter problem as follows.
In particular, we can express the eigenvalues of $\bm S(T)$ as a function of the model parameters, and obtain a set of equations $\lambda(\bm{S}(T))= F_\lambda(\bm\Omega)$ for the parameters $\bm\Omega$.
Notice that these equations are independent of the group labels of the nodes.
The reason is that a permutation of the nodes defines a similarity transformation of $\nEA$, and hence the eigenvalues of $\nEA$ -- and accordingly those of $\bm S(T)$ -- are not affected by the ordering of the rows and columns of the matrix.

To make these ideas concrete let us reconsider our earlier example.
\begin{example}[continued]\label{ex:example1_cont}
	In terms of spectral analysis, the nonzero eigenvalues of $\bm S(T)$ are:
	\begin{equation}\label{E:example_spectral}
        \lambda_i(\bm S(T)) = 
	\begin{cases}
	 1, & \text{if } i=1,\\
     \left(\frac{a-b}{a+(k-1)b}\right)^{2T}, & \text{if } i=2,\ldots,k.
	\end{cases}
	\end{equation}
\end{example}

As expected, these equations are independent of the group labels.
However, as outlined in our discussion above, as the equations are redundant, on their own they are insufficient to identify all the parameters, but some additional side-information is needed.
Nevertheless, the fact that these equations are independent of the group labels makes this ``eigenvalue-based'' parameter inference strategy more amenable to theoretical analysis, as we will see in~\Cref{sec:th_param_est}.

\subsection{A meta-algorithm for parameter estimation}\label{sec:parameter_algo}
Despite the difficulties identified in the previous section, we can still obtain a viable parameter estimation approach.
To avoid issues of non-identifiability, we will have to assume some additional, partial prior  knowledge on the model parameters.
More concretely, suppose that we know of some additional constraints on $\bm \Omega$ (or alternatively $\bm{\mathcal{A}}$), of the form $h(\bm\Omega)=0$, for some known function $h$.
Such constraints could be, e.g., on the expected density $\rho$ of the network, in which case $h(\bm \Omega)$ would be of the form 
\begin{equation*}
    h(\bm \Omega) = \rho -  \frac{\bm 1^\top\EA \bm 1}{n^2}  = 0.
\end{equation*}

As discussed earlier, based on our dynamical model we can obtain relations of the form ${\bm S(T) = F(\bm\Omega, \bm G)}$ for the entries of the ensemble covariance matrix and $\lambda(\bm S(T)) = F_\lambda(\bm \Omega)$ for the eigenvalues.
Taking these equations and the constraints $h(\bm \Omega)$ together, we obtain a system of (generally redundant) equations of the form
    \begin{subequations}\label{eq:parameter_estimation}
\begin{align}
    \bm S(T) &= F(\bm\Omega,\bm G)\label{eq:part_based_estimation}\\ 
    \lambda(\bm S(T)) &= F_\lambda(\bm \Omega)\label{eq:ev_based_estimation}\\
    h(\bm\Omega) &= 0.
\end{align}
\end{subequations}

Our approach is to ``solve'' this system of equations, while replacing $\bm S(T)$ by its estimate $\sCov$, as outlined in \Cref{alg2}.
Observe that the sampling time $T$ enters explicitly in the functional relations that we exploit, and therefore, unlike the partition recovery problem (cf. Algorithm~\ref{alg}), $T$ must be known.

As discussed in the previous section, the word ``solve'' in the description of \Cref{alg2}  is to be understood as some not fully specified approach that yields an approximate solution of the system~\eqref{eq:parameter_estimation}.

\begin{algorithm}[thb!]
    \caption{\vspace{1pt} Model parameter estimation\label{alg2}}
    \begin{algorithmic}[1]
    \item[]{\textbf{Input:} Samples $\{\bm x_T^{(i)}\}_{i=1}^s$, Number of groups $k$, Time $T$}
    \item[]{\textbf{Output:} Model Parameters $\bm\Omega$}
    \STATE{Compute the sample covariance $\sCov := \frac{1}{s}\sum_{i=1}^s [\bm x_T^{(i)} - \bm{\bar x}_T][\bm x_T^{(i)}- \bm{\bar x}_T]^\top$}\\
    \STATE{``Solve'' the set of equations
    \begin{align*}
        \sCov &= F(\bm\Omega,\bm G)\\ 
        \lambda(\sCov) &= F_\lambda(\bm \Omega)\\
         h(\bm\Omega) &= 0.
    \end{align*}
    to estimate model parameters $\bm \Omega$ }\\
\end{algorithmic}
\end{algorithm}

\subsection{Theoretical guarantees for a planted partition model}\label{sec:th_param_est}
In this section, we provide some theoretical results for a parameter estimation strategy in which we will ignore~\Cref{eq:part_based_estimation} and concentrate on recovering the parameters solely based on the eigenvalues and the constraints.
To this effect, we will leverage our concentration result in~\Cref{T:main_result} to provide a bound between the estimated eigenvalues $\lambda_i(\sCov)$ and their ensemble counterparts $\lambda_i(\bm S(T))$.
These bounds then translate immediately into ($T$-dependent) error-bounds on the parameter recovery.

For simplicity, we illustrate our approach by considering our example dynamics $f(\EA) = \bm L$ on a graph drawn from a two-group planted partition model (see~\Cref{ex:example1,ex:example1_cont}) subject to a density constraint on the network and the assumption that $a>b$.
Similar results could be derived for other dynamics as well, using analogous arguments, but they would depend  on the (possibly more complicated) parametrization of the model.

Recall from~\Cref{ex:example1} that  the second eigenvalue of $\nEA^{2T}$ satisfies 
$$(\lambda_2(\nEA^{2T}))^{1/2T}= \frac{a-b}{a+(k-1)b}.$$
Given also prior knowledge of the density $\rho = \frac{a+(k-1)b}{nk}$, we can recover the parameter $a$, for the case $k=2$ as follows:
\begin{equation}\label{E:estimate_a}
a = (\lambda_2(\nEA^{2T}))^{1/2T} \rho n + \rho n.
\end{equation}
Then, to estimate $a$ from samples, we can  replace $\nEA^{2T}$ in \eqref{E:estimate_a} by the sample covariance matrix $\sCov$.
For this estimation strategy we can derive the following error bound.

\begin{theorem}\label{T:parameter_estimation_result}
	Consider an SBM $\mathcal M(\bm{\mathcal{A}}) \in \mathcal M_n(\epsilon)$ with two equally-sized blocks, and a parameter matrix of the form $\bm \Omega = \frac{(a-b)}{n}\bm I_2 + \frac{b}{n}\bm 1^{}_2\bm 1_2^\top$, with $a > b$.
    Assume that $\rho = (a+b)/(2n)$  is given.
	Consider the estimator of $a$ given by
    \begin{equation}\label{eq:estimator_lambda}
	\hat{a} = (\lambda_2(\sCov))^{1/2T} \rho n + \rho n.
	\end{equation}
	Then, using the notation of \Cref{T:main_result}, the estimation error $\eta := |\hat{a} - a| / a$ satisfies the upper bound
	\begin{equation}\label{E:theo_parameter_estimation}
	\eta \le \frac{(2TM +B(T) )^{1/2T}}{\lambda_2(\nEA) + 1},
	\end{equation}
	with probability at least $1-5s^{-1}-2\epsilon$.
\end{theorem}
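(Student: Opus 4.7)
The plan is to reduce the error in $\hat a$ to the perturbation bound already proved in \Cref{T:main_result}, through three elementary steps.

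First I would rewrite both $a$ and $\hat a$ in a parallel form so their difference is expressible purely through a second eigenvalue. From~\Cref{ex:example1_cont} with $k=2$, the non-trivial eigenvalue of $\bm S(T)=\nEA^{2T}$ satisfies $(\lambda_2(\bm S(T)))^{1/2T} = (a-b)/(a+b) = \lambda_2(\nEA)$. Combined with $\rho n = (a+b)/2$, one checks that
\begin{equation*}
a \;=\; \rho n\bigl(\lambda_2(\nEA) + 1\bigr) \;=\; \rho n\bigl((\lambda_2(\bm S(T)))^{1/2T} + 1\bigr),
\end{equation*}
while by definition $\hat a = \rho n\bigl((\lambda_2(\sCov))^{1/2T}+1\bigr)$. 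Hence
\begin{equation*}
\eta = \frac{|\hat a - a|}{a} = \frac{\bigl|(\lambda_2(\sCov))^{1/2T} - (\lambda_2(\bm S(T)))^{1/2T}\bigr|}{\lambda_2(\nEA) + 1}.
\end{equation*}

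Second, I would bound the numerator by invoking Weyl's inequality for symmetric matrices, which gives $|\lambda_2(\sCov) - \lambda_2(\bm S(T))| \le \|\sCov - \bm S(T)\|$, and then apply~\Cref{T:main_result} to conclude that with probability at least $1-5s^{-1}-2\epsilon$,
\begin{equation*}
|\lambda_2(\sCov) - \lambda_2(\bm S(T))| \;\le\; 2TM + B(T).
\end{equation*}
(One should also observe that $\lambda_2(\bm S(T))\ge 0$ and, under the high-probability event, $\lambda_2(\sCov)\ge 0$ as well, so that the fractional powers are real-valued; if not, one can restrict the estimator to the non-negative part of the eigenvalue.)

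Third, I would use the Hölder-type scalar inequality $|x^\alpha - y^\alpha|\le |x-y|^\alpha$ for all $x,y\ge 0$ and $\alpha\in(0,1]$, which follows from subadditivity of the concave function $t\mapsto t^\alpha$ on $[0,\infty)$. Taking $\alpha = 1/(2T)\in(0,1]$ yields
\begin{equation*}
\bigl|(\lambda_2(\sCov))^{1/2T} - (\lambda_2(\bm S(T)))^{1/2T}\bigr| \;\le\; |\lambda_2(\sCov) - \lambda_2(\bm S(T))|^{1/2T} \;\le\; (2TM + B(T))^{1/2T},
\end{equation*}
and dividing by $\lambda_2(\nEA)+1$ delivers~\cref{E:theo_parameter_estimation}.

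The only subtle point, and what I expect to be the main obstacle, is the Hölder bound $|x^\alpha - y^\alpha|\le|x-y|^\alpha$: it is very loose when $|x-y|$ is small (because $(2TM+B(T))^{1/2T}$ can be close to $1$ even when $2TM+B(T)$ is small), and it requires $x,y\ge 0$, so one must either argue that $\lambda_2(\sCov)\ge 0$ holds on the high-probability event (using that $\sCov$ is positive semidefinite as a sample covariance) or else replace $\lambda_2(\sCov)$ by $\max\{\lambda_2(\sCov),0\}$ in the estimator. Since $\sCov$ is PSD by construction, this sign issue dissolves, and the rest of the argument is a direct combination of the three ingredients above.
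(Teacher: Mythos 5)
Your proposal is correct and follows essentially the same route as the paper: express $a$ and $\hat a$ in the parallel form $\rho n(\lambda_2(\cdot)^{1/2T}+1)$, apply the subadditivity of $x\mapsto x^{1/2T}$ (which the paper attributes to Jensen's inequality) together with Weyl's inequality, and then invoke \Cref{T:main_result}. Your additional remark that $\sCov$ is positive semidefinite, so the fractional power is well defined, is a point the paper leaves implicit but does not change the argument.
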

The proof of \Cref{T:parameter_estimation_result} is given in~\Cref{sec:implications_inference_parameters}.

Note that  the numerator in \eqref{E:theo_parameter_estimation} indicates that the sampling time $T$ plays a fundamental role in the estimation accuracy of $\hat{a}$. 
Moreover, this dependency is similar to the one encountered in Theorem~\ref{T:result_misclassification} for partition recovery. In particular, there is a term that increases linearly with $T$, i.e., $2TM$, and another term that may decrease with $T$, i.e., $B(T)$ (cf.~the discussion after Theorem~\ref{T:main_result}).
The denominator in \eqref{E:theo_parameter_estimation} can also be explained intuitively.
A larger $\lambda_2(\nEA)$ is associated with a marked difference between $a$ and $b$ [cf.~\eqref{E:example_spectral}], which translates into better delineated blocks within our SBM. 
In this scenario, estimating $a$ should be easier, and this is captured in~\eqref{E:theo_parameter_estimation}.

\begin{remark}
    The above outlined eigenvalue-based parameter inference strategy is independent of the partition recovery problem and thus enables a clean theoretical analysis of the estimation error.
    Had we considered instead the general approach based on \Cref{eq:part_based_estimation}, we would be solving simultaneously the partition recovery and the parameter estimation problems, and the estimation error would be affected by partition recovery errors. While a theoretical analysis of such an interplay 
may be feasible in principle, we do not pursue it further here, but instead illustrate it with the help of numerical experiments in~\Cref{sec:numerical_exp}.
\end{remark}

\section{Theoretical analysis}\label{sec:theoretical_analysis}

In the following, we provide the theoretical analysis of the spectral inference algorithms (\Cref{alg,alg2}) discussed above.
We first prove the concentration result in~\Cref{T:main_result}, which underlies the success of our spectral algorithms.
The proof of this result follows from a series of lemmas that we establish in~\Cref{sec:bound_second_Term,sec:bound_first_Term}.
Afterwards, we show in~\Cref{sec:implications_inference} how~\Cref{T:main_result} implies the partition recovery performance guarantees stated in~\Cref{T:result_misclassification}. We finally  prove \Cref{T:parameter_estimation_result} in \Cref{sec:implications_inference_parameters}.

\subsection{Proof of~\Cref{T:main_result}: Concentration of the sample covariance matrix}\label{sec:concentration_of_covariance}
In the next two subsections we prove~\Cref{T:main_result} through a series of lemmas. 
The proof uses the triangle inequality 
\begin{equation}\label{E:main_Triangle_inequality}
\|\sCov - \bm S(T) \| \le \|\sCov - \Cov\| + \|\Cov - \bm S(T)\|,
\end{equation}
and separate bounds for the two terms on the right-hand side.

\subsubsection{Bounding the first term in~\eqref{E:main_Triangle_inequality}}\label{sec:bound_first_Term} 
We now establish a bound for the distance ${\|\sCov - \Cov\|}$ between the true and the sample covariances.

\begin{lemma}\label{T:boundSampleCovarianceToCovariance}
    Under the conditions in~\Cref{T:main_result}, 
    we have
    \begin{equation}\label{eq:bt}
        \left \| \sCov - \Cov \right \| \le B(T),
    \end{equation}
    with probability at least $1-5s^{-1}-\epsilon$, 
    where $B(T)$ is as defined in~\Cref{T:main_result}. 
\end{lemma}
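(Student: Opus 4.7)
My plan is to decouple the two independent sources of randomness: the draw of the graph $\bm A$ and, conditional on $\bm A$, the draw of the initial vectors $\{\bm x_0^{(i)}\}$. Conditional on $\bm A$, the samples $\bm x_T^{(i)} = \nA^{T}\bm x_0^{(i)}$ are i.i.d., zero-mean, and have common covariance matrix $\Cov = \nA^{2T}$. Since the map $\bm v \mapsto \nA^{T}\bm v$ is linear and $\bm A$ is being conditioned upon, $\bm x_T^{(i)}$ inherits sub-Gaussianity from $\bm x_0^{(i)}$, and Assumption~\ref{assumption2} carries over to $\bm x_T^{(i)}$ with the same constant $c_0$ (both properties are defined through one-dimensional projections $\bm u^\top \bm x$, and projecting $\bm x_T^{(i)}$ along $\bm u$ is the same as projecting $\bm x_0^{(i)}$ along $\nA^{T}\bm u$).

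The main step is to apply, conditionally on $\bm A$, the sample covariance concentration inequality of Bunea and Xiao (referenced as~\Cref{T:boundSampleCovarianceToCovarianceBunea}) to the i.i.d.\ sequence $\bm x_T^{(i)}$ with population covariance $\Cov$. Together with $\|\Cov\| = \|\nA\|^{2T} = 1$ (since $\nA$ is similar to a row-stochastic matrix, as noted after~\cref{eq:observation_model_simple}), this yields a bound of the form
\begin{equation*}
  \|\sCov - \Cov\| \;\le\; C\,\tr(\Cov)\,\sqrt{\frac{\ln s}{s}}
\end{equation*}
holding with (conditional) probability at least $1-5s^{-1}$, for some absolute constant $C$.

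It then remains to bound $\tr(\Cov) = \sum_{i=1}^{n}\lambda_i(\nA)^{2T}$ by the effective-rank expression appearing in $B(T)$. I split the sum into three pieces. For $i=1$, $\lambda_1(\nA) = 1$ contributes exactly $1$. For the intermediate eigenvalues $i = 2, \ldots, k$, I invoke~\Cref{T:concentration_A} and Weyl's inequality: on the event $\{\|\nA - \nEA\| \le M\}$ (which has probability at least $1-\epsilon$), $|\lambda_i(\nA)| \le |\elambda_i| + M$, and combined with the deterministic bound $|\lambda_i(\nA)| \le \|\nA\| = 1$ this gives $\lambda_i(\nA)^{2T} \le \min\{1,|\elambda_i|+M\}^{2T}$. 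For the tail $i = k+1, \ldots, n$, the assumption $\text{rank}(\nEA) = k$ yields $\elambda_i = 0$, and Weyl's inequality on the same event gives $|\lambda_i(\nA)| \le M$, so each term contributes at most $M^{2T}$. Adding the three pieces gives $\tr(\Cov) \le 1 + \sum_{i=2}^{k}\min\{1,|\elambda_i|+M\}^{2T} + (n-k)M^{2T}$, which is exactly the parenthesized factor in $B(T)$.

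Finally, a union bound over the graph-concentration event (probability $\ge 1-\epsilon$) and the conditional Bunea--Xiao event (conditional probability $\ge 1-5s^{-1}$) delivers~\eqref{eq:bt} with probability at least $1-5s^{-1}-\epsilon$, as claimed. The main technical obstacle I anticipate is the careful verification that the hypotheses of~\Cref{T:boundSampleCovarianceToCovarianceBunea} transfer from $\bm x_0^{(i)}$ to $\bm x_T^{(i)}$ under the conditioning on $\bm A$; everything else is a mechanical eigenvalue-by-eigenvalue bound driven by Weyl's inequality and~\Cref{T:concentration_A}.
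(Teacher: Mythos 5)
Your proposal is correct and follows essentially the same route as the paper: apply the Bunea--Xiao concentration bound of \Cref{T:boundSampleCovarianceToCovarianceBunea} (conditionally on $\bm A$) together with $\|\Cov\|=1$, and bound the effective rank $r_e(\Cov)=\tr(\Cov)$ by splitting the eigenvalue sum into the three pieces controlled via Weyl's inequality and \Cref{T:concentration_A}. Your explicit check that sub-Gaussianity and \Cref{assumption2} transfer from $\bm x_0^{(i)}$ to $\bm x_T^{(i)}$ under the conditioning is a point the paper leaves implicit, but the argument is otherwise the same.
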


For the proof of~\Cref{T:boundSampleCovarianceToCovariance} we make use of the following result established in~\cite{bunea2015sample}.
\begin{lemma}{\cite[Theorem 2.1]{bunea2015sample}}\label{T:boundSampleCovarianceToCovarianceBunea}
    Let $\bm x \in \mathbb{R}^n$ be a zero-mean sub-Gaussian vector satisfying~\Cref{assumption2}, with covariance matrix $\bm\Sigma$. 
    Let $\bm x^{(1)},\ldots,\bm x^{(i)}$ be i.i.d samples drawn from $\bm x$.
    Consider the sample covariance matrix ${\widehat{\bm\Sigma}:= \frac{1}{s}\sum_{i=1}^s [\bm x^{(i)} - \bm{\bar x}][\bm x^{(i)}- \bm{\bar x}]^\top}$, where ${\bm{\bar x} := \frac{1}{s}\sum_{i=1}^s\bm x^{(i)}}$ is the sample mean.
    Then, with probability at least $1-5s^{-1}$ we have 
    \begin{equation}
        \|\widehat{\bm\Sigma} - \bm\Sigma\|_F \le c \cdot \|\bm\Sigma\| \cdot r_e(\bm\Sigma) \cdot \sqrt{\dfrac{\ln s}{s}},
    \end{equation}
    where $r_e(\bm\Sigma) = {\sum_i \lambda_i(\bm\Sigma)}{/}{\|\Sigma\|}$ is the effective rank of $\bm\Sigma$, $c$ is a constant that depends on $c_0$ in Assumption~\ref{assumption2}, and $\|\cdot\|_F$ denotes the Frobenius norm.
\end{lemma}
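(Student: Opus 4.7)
The strategy is to apply the general covariance-concentration result (Lemma~\ref{T:boundSampleCovarianceToCovarianceBunea}) to the samples $\bm x_T^{(i)}$, \emph{after} conditioning on a realization of the underlying graph. I would first introduce the high-probability event $\mathcal{E}_{\bm L} := \{\|\nA - \nEA\| \le M\}$, which by Lemma~\ref{T:concentration_A} occurs with probability at least $1-\epsilon$. Conditional on $\nA$, the vectors $\bm x_T^{(i)} = \nA^T \bm x_0^{(i)}$ are i.i.d.\ zero-mean sub-Gaussian, with true covariance $\Cov = \nA^{2T}$; the sub-Gaussian and~\Cref{assumption2} properties carry over from $\bm x_0^{(i)}$ because for any $\bm u$ one has $\bm u^\top \bm x_T^{(i)} = (\nA^T \bm u)^\top \bm x_0^{(i)}$, so the relevant moments are inherited by a simple linear reparametrization of the test vector.

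Next, I would identify the two quantities $\|\Cov\|$ and $r_e(\Cov)$ that appear in the Bunea--Xiao bound. Since $\nA$ is similar (via $\bm D^{1/2}$) to the column-stochastic matrix $\bm A \bm D^{-1}$, we have $\|\nA\| = 1$, hence $\|\Cov\| = \|\nA\|^{2T} = 1$ and $r_e(\Cov) = \tr(\nA^{2T}) = \sum_{i=1}^n \lambda_i(\nA)^{2T}$. To control this trace on $\mathcal{E}_{\bm L}$, I would invoke Weyl's inequality, which yields $|\lambda_i(\nA) - \elambda_i| \le M$ for every $i$, and then split the sum into three groups corresponding exactly to the three summands inside $B(T)$: the top eigenvalue contributes $\lambda_1(\nA)^{2T} \le 1$; the next $k-1$ contribute at most $\sum_{i=2}^k \min\{1,|\elambda_i|+M\}^{2T}$, using Weyl together with the uniform bound $\|\nA\|\le 1$; and the remaining $n-k$ contribute at most $(n-k)M^{2T}$, because $\nEA$ has rank $k$ so $\elambda_i = 0$ for $i>k$.

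Plugging these two estimates into Lemma~\ref{T:boundSampleCovarianceToCovarianceBunea}, applied conditionally on $\nA$, gives with (conditional) probability at least $1-5s^{-1}$
\begin{equation*}
  \|\sCov - \Cov\|_F \;\le\; c \cdot 1 \cdot r_e(\Cov) \cdot \sqrt{\tfrac{\ln s}{s}}.
\end{equation*}
Bounding the spectral norm by the Frobenius norm and absorbing $c$ into the absolute constant $C$ that defines $B(T)$ yields $\|\sCov - \Cov\| \le B(T)$. A union bound over $\mathcal{E}_{\bm L}$ and the event on which the conditional covariance-concentration holds produces the claimed overall probability $1 - 5s^{-1} - \epsilon$.

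The main subtlety — more than a genuine obstacle — is the careful conditioning: Lemma~\ref{T:boundSampleCovarianceToCovarianceBunea} is stated for a fixed underlying distribution, whereas in our setting $\Cov$ is itself random through $\nA$. Treating $\nA$ as fixed by conditioning on $\mathcal{E}_{\bm L}$, checking that the conditional distribution of the $\bm x_T^{(i)}$ still satisfies the sub-Gaussian and bounded-moment hypotheses, and then producing a \emph{uniform} bound on $r_e(\Cov)$ that depends only on the deterministic spectrum of $\nEA$ (rather than on the random spectrum of $\nA$), is what makes the argument go through cleanly and is what causes the three characteristic contributions $1$, $\sum_{i=2}^k \min\{1,|\elambda_i|+M\}^{2T}$, and $(n-k)M^{2T}$ to appear in $B(T)$.
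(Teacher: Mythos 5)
There is a genuine gap here, but it is of an unusual kind: you have proved the wrong statement. The lemma in question \emph{is} Theorem~2.1 of Bunea and Xiao~\cite{bunea2015sample} --- a general concentration result for the sample covariance of an arbitrary zero-mean sub-Gaussian vector with covariance $\bm\Sigma$, stated with no reference to graphs, SBMs, or the dynamics $\bm x_{t+1} = \nA \bm x_t$. The paper does not prove it; it imports it by citation, and any self-contained proof would require the actual machinery of that reference (a dimension-free analysis of $\|\widehat{\bm\Sigma} - \bm\Sigma\|_F$ for sub-Gaussian samples, in which the effective rank $r_e(\bm\Sigma)$ emerges from the analysis itself, via moment/chaining or Bernstein-type arguments under \Cref{assumption2}). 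Your proposal contains none of this; instead, its very first step is to ``apply Lemma~\ref{T:boundSampleCovarianceToCovarianceBunea}'' --- that is, to invoke the statement you were asked to prove. As a proof of that statement, the argument is circular and establishes nothing.

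What your argument actually proves is the paper's \Cref{T:boundSampleCovarianceToCovariance} (that $\|\sCov - \Cov\| \le B(T)$ with probability at least $1-5s^{-1}-\epsilon$), and for that downstream result your route matches the paper's quite closely: the paper likewise conditions on the realization of $\nA$, applies the cited Bunea--Xiao bound with $\|\Cov\| = 1$, and controls the effective rank $r_e(\Cov) = \sum_i |\lambda_i(\nA)|^{2T}$ via Weyl's inequality together with the concentration $\|\nA - \nEA\| \le M$ from \Cref{T:concentration_A} and the rank-$k$ structure of $\nEA$ (this is the content of \Cref{T:boundEffectiveRank}), exactly producing the three contributions $1$, $\sum_{i=2}^k \min\{1, |\elambda_i|+M\}^{2T}$, and $(n-k)M^{2T}$. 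So your understanding of how the cited result is \emph{used} is sound; what is missing is any proof of the cited result itself, which is what the statement demands.
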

Notice that we always have $\|\widehat{\bm\Sigma} - \bm\Sigma\| \le \|\widehat{\bm\Sigma} - \bm\Sigma\|_F$ and, for the scenario considered here, $\|\Cov \| = 1$, as by construction the dominant eigenvalue of $\bm L$ equals $1$. Thus, \Cref{T:boundSampleCovarianceToCovarianceBunea}  provides a useful tool to prove~\Cref{T:boundSampleCovarianceToCovariance}, provided that we can find an upper bound on the effective rank $r_e(\Cov)$.

\begin{lemma}[Bound on the effective rank]\label{T:boundEffectiveRank}
    Under the conditions stated in~\Cref{T:main_result},  the effective rank satisfies the bound
    \begin{equation}
       r_e(\Cov) \le 1 + \sum_{i=2}^k\min\{1, |\elambda_i| + M\}^{2T} + (n-k)M^{2T},
    \end{equation}
    with probability at least $1-\epsilon$.
\end{lemma}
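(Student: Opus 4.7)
\textbf{Proof plan for \Cref{T:boundEffectiveRank}.} The strategy is to directly compute the effective rank from the spectrum of $\bm L$ and then control the individual eigenvalues via Weyl-type perturbation against the known spectrum of $\bm{\mathcal L}$. Since $\bm\Sigma(T)=\bm L^{2T}$ and $\bm L$ is symmetric, the eigenvalues of $\bm\Sigma(T)$ are $\lambda_i(\bm L)^{2T}=|\lambda_i(\bm L)|^{2T}$ (as $2T$ is even). The graph $\bm A$ lies in a connected regime with high probability under the degree assumption in \Cref{def:SBMSconcentrate}, so $\bm L$ has top eigenvalue $1$ and $\|\bm\Sigma(T)\|=1$. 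Thus
\begin{equation*}
    r_e(\bm\Sigma(T)) \;=\; \tr(\bm\Sigma(T)) \;=\; 1 + \sum_{i=2}^{k}|\lambda_i(\bm L)|^{2T} + \sum_{i=k+1}^{n}|\lambda_i(\bm L)|^{2T},
\end{equation*}
where the eigenvalues are ordered in decreasing magnitude.

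The next step is to compare $|\lambda_i(\bm L)|$ with $|\mu_i|$. I would apply \Cref{T:concentration_A}, which gives $\|\bm L - \bm{\mathcal L}\| \le M$ with probability at least $1-\epsilon$. Since both matrices are symmetric, their singular values equal the absolute values of their eigenvalues, so Weyl's inequality for singular values (equivalently, Mirsky's theorem applied in magnitude order) yields
\begin{equation*}
    \bigl||\lambda_i(\bm L)| - |\mu_i|\bigr| \le \|\bm L - \bm{\mathcal L}\| \le M, \qquad i=1,\ldots,n,
\end{equation*}
where both spectra are sorted in decreasing magnitude. Combining this with the a priori bound $|\lambda_i(\bm L)|\le 1$ gives $|\lambda_i(\bm L)|\le \min\{1,|\mu_i|+M\}$ for every $i$.

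I would then split the trace sum according to the rank structure of $\bm{\mathcal L}$. For $2\le i\le k$, the bound $|\lambda_i(\bm L)|^{2T}\le \min\{1,|\mu_i|+M\}^{2T}$ is used directly. For $i>k$, we have $\mu_i=0$, so $|\lambda_i(\bm L)|\le M$, and hence $|\lambda_i(\bm L)|^{2T}\le M^{2T}$. Summing and using that the leading term contributes exactly $1$ gives
\begin{equation*}
    r_e(\bm\Sigma(T)) \;\le\; 1 + \sum_{i=2}^{k}\min\{1,|\mu_i|+M\}^{2T} + (n-k)\,M^{2T},
\end{equation*}
which is the claimed bound, valid with probability at least $1-\epsilon$ (inherited from the one application of \Cref{T:concentration_A}).

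The only real subtlety is ensuring that the Weyl-type comparison respects the magnitude ordering, since the paper's eigenvalues of $\bm{\mathcal L}$ are enumerated by decreasing absolute value rather than signed decreasing order. This is handled cleanly by passing to singular values, using symmetry of both matrices; the rest of the argument is essentially bookkeeping on the trace split. No additional randomness beyond the event in \Cref{T:concentration_A} is introduced.
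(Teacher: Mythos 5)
Your proof is correct and follows essentially the same route as the paper: write $r_e(\bm\Sigma(T))=\sum_i|\lambda_i|^{2T}$ using $\|\bm\Sigma(T)\|=1$, invoke \Cref{T:concentration_A} together with a Weyl-type perturbation bound to get $|\lambda_i|\le\min\{1,|\mu_i|+M\}$, and split the sum at index $k$ where $\mu_i=0$. Your extra care with the magnitude ordering (passing to singular values rather than applying signed-order Weyl directly) is a clean way to justify the step the paper states more loosely, but it does not change the argument.
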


\begin{proof}
    Recall that we denote the eigenvalues of $\nA$ by $\lambda_i$ and the eigenvalues of $\nEA$ by $\elambda_i$.
   Recall our convention that the eigenvalues have been sorted in descending order, according to their magnitude, i.e., $|\lambda_1| \ge |\lambda_2| \ge \cdots \ge |\lambda_n|$ and $|\elambda_1| \ge |\elambda_2| \ge \cdots \ge |\elambda_n|$, respectively.
    From the definition of the effective rank and the fact that $\|\Cov\| =1$ it follows that
    \begin{align*}
        r_e(\Cov) = \sum_{i=1}^n \lambda_i(\Cov) = \sum_{i=1}^n \lambda_i^{2T} = \sum_{i=1}^n |\lambda_i|^{2T}.
    \end{align*}
    We can decompose this sum into three parts as follows
    \begin{align*}
        r_e(\Cov) = 1 + \sum_{i=2}^k |\lambda_i|^{2T} + \sum_{j=k+1}^n|\lambda_j|^{2T}.
    \end{align*}
    In order to relate the (random) eigenvalues $\lambda_i$ to quantities that can be expressed in terms of the model parameters, we make use of the triangle inequality ${|\lambda_i| \le |\elambda_i| + |\lambda_i -\elambda_i|}$.
    Weyl's Theorem (see e.g.,~\cite{horn1990matrix}) states that $|\lambda_i- \elambda_i | \le \|\nA -\nEA\|$ for all $i$. 
    Furthermore, \Cref{T:concentration_A} implies that $\|\nA-\nEA\| \le M$, with probability at least ${1-\epsilon}$.
Finally, we also know that $|\lambda_i| \le 1$ for all $i$, since $\|\bm L\|=1$ (more generally, we assumed $\|f(\bm A)\| \le 1$). 
    We can thus deduce that $|\lambda_i| \le \min\{1, |\elambda_i| + M\}$ for all $i$, with probability at least $1-\epsilon$.
Using in addition the fact that  $\mu_i = 0$ for $i>k$ and $M < 1$ if $\mathcal M(\bm{\mathcal{A}}) \in \mathcal M_n(\epsilon)$, we conclude that
    \begin{align*}
        r_e(\Cov) &\le 1 + \sum_{i=1}^k \min\{1, |\elambda_i| + M\}^{2T}+\!\! \sum_{j=k+1}^n\!\!\min\{1, |\elambda_i| + M\}^{2T} \\
                  & \le  1 +\! \sum_{i=2}^k \min\{1, |\elambda_i| + M\}^{2T} + (n\!-\!k) M^{2T},
    \end{align*}
with probability at least $1-\epsilon$.
\end{proof}

\begin{proof}[Proof of \Cref{T:boundSampleCovarianceToCovariance}]
    By combining Lemmas~\ref{T:boundSampleCovarianceToCovarianceBunea} and \ref{T:boundEffectiveRank}, \Cref{T:boundSampleCovarianceToCovariance} follows.
\end{proof}

\subsubsection{Bounding the second term in~\eqref{E:main_Triangle_inequality}}\label{sec:bound_second_Term}
We derive a bound for the difference $\|\Cov - \bm S(T)\|$, between the true and the ensemble covariances.
We start with the following lemma.
\begin{lemma}\label{T:ConcentrationPowersOfMatrix}
    For any integer $t>0$, we have 
    \begin{equation}\label{E:lemma_iterative}
    \|\nA^t - \nEA^t\| \le t \|\nA - \nEA\|.
    \end{equation}
\end{lemma}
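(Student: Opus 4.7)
The plan is to proceed by induction on $t$. The base case $t=1$ is immediate. For the inductive step, I would use the telescoping identity
\begin{equation*}
\nA^{t+1} - \nEA^{t+1} = \nA\bigl(\nA^t - \nEA^t\bigr) + (\nA - \nEA)\nEA^t,
\end{equation*}
and then apply the triangle inequality together with the submultiplicativity of the spectral norm to obtain
\begin{equation*}
\bigl\|\nA^{t+1} - \nEA^{t+1}\bigr\| \le \|\nA\|\cdot \bigl\|\nA^t - \nEA^t\bigr\| + \|\nA - \nEA\|\cdot \|\nEA^t\|.
\end{equation*}

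The key auxiliary facts I would invoke are $\|\nA\| \le 1$ and $\|\nEA\| \le 1$. The first was already noted in the discussion of the dynamical model, where $\nA$ was shown to be similar to the random-walk transition matrix $\bm A \bm D^{-1}$. For the second, note that $\mathbb{E}[\bm A]\bm 1 = \mathbb{E}[\bm D]\bm 1$, so $\mathbb{E}[\bm D]^{-1}\mathbb{E}[\bm A]$ has $\bm 1$ as a right eigenvector with eigenvalue $1$ and, being a non-negative matrix whose rows sum to $1$, it has spectral radius $1$; since $\nEA = \mathbb{E}[\bm D]^{1/2}\bigl(\mathbb{E}[\bm D]^{-1}\mathbb{E}[\bm A]\bigr)\mathbb{E}[\bm D]^{-1/2}$ is similar to this matrix and is symmetric, its operator norm equals its spectral radius, which is $1$. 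In particular, $\|\nEA^t\| \le \|\nEA\|^t \le 1$ for every $t \ge 0$. Substituting these bounds into the inequality above yields
\begin{equation*}
\bigl\|\nA^{t+1} - \nEA^{t+1}\bigr\| \le t\,\|\nA - \nEA\| + \|\nA - \nEA\| = (t+1)\,\|\nA - \nEA\|,
\end{equation*}
completing the induction.

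There is no real obstacle here; the only subtlety worth spelling out is the bound $\|\nEA\| \le 1$, which was not explicitly recorded in the preceding text even though $\|\nA\|\le 1$ was. As an alternative, one could dispense with induction altogether and simply expand
\begin{equation*}
\nA^t - \nEA^t = \sum_{i=0}^{t-1} \nA^i\,(\nA - \nEA)\,\nEA^{t-1-i},
\end{equation*}
then bound each of the $t$ summands in operator norm by $\|\nA\|^i\|\nEA\|^{t-1-i}\|\nA - \nEA\| \le \|\nA - \nEA\|$, and sum.
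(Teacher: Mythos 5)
Your proof is correct and takes essentially the same approach as the paper: the paper iterates the same one-step telescoping identity (with the roles of $\nA$ and $\nEA$ mirrored) and likewise invokes $\|\nA\| = \|\nEA\| = 1$. Your explicit justification of $\|\nEA\| \le 1$ via similarity to the row-stochastic matrix $\mathbb{E}[\bm D]^{-1}\mathbb{E}[\bm A]$ is a detail the paper merely asserts.
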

\begin{proof}
    We will show that for any integer $t>0$,
    \begin{equation}\label{E:lemma_iterative_proof_010}
        \|\nA^t - \nEA^t\| \le \|\nA - \nEA\| + \|\nA^{t-1} - \nEA^{t-1}\|.
    \end{equation}
    \Cref{E:lemma_iterative} then  follows directly from repeated application of \eqref{E:lemma_iterative_proof_010}. 
    To show \eqref{E:lemma_iterative_proof_010}, notice that
    \begin{align*}
        \left\|\nA^t - \nEA^t\right\| &= \left\| \left(\nA - \nEA\right)\nA^{t-1} + \nEA\left(\nA^{t-1}-\nEA^{t-1}\right) \right\| \\
                                      & \le \left\| \nA - \nEA\right\|\|\nA^{t-1}\| + \|\nEA\|\left\|\nA^{t-1}-\nEA^{t-1} \right\|
                                      = \left\| \nA - \nEA\right\| + \left\|\nA^{t-1}-\nEA^{t-1} \right\|,
    \end{align*}
    where the last equality follows from the fact that $\|\nEA\| = \|\nA\| = 1$.
\end{proof}

\begin{proof}[Concluding the proof of \Cref{T:main_result}]
Recall that $\bm S(T)=\nEA^{2T}$ and $\Cov=\nA^{2T}$, and that 
$\left\| \nA - \nEA\right\|\leq M$, with probability at least $1-\epsilon$ (\Cref{T:concentration_A}).
 \Cref{T:ConcentrationPowersOfMatrix} then implies that
 \begin{equation}\label{eq:two}
       \|\Cov - \bm S(T)\| \leq 2T \|\nA - \nEA\| \le 2TM,
    \end{equation}
 with probability at least $1-\epsilon$.
We bound the two terms on the right-hand side of \eqref{E:main_Triangle_inequality}
using \eqref{eq:bt} and \eqref{eq:two}, respectively, and hence establish the validity of  \Cref{T:main_result}.
\end{proof}

\subsection{Proof of \Cref{T:main_result_gen_f}}\label{sec:proof_main_gen_f}
\begin{proof}
    The proof is analogous to the proof of \Cref{T:main_result}.
    Instead of using a bound for $\|\nA -\nEA\|$, we develop a bound for $\|f(\bm A) - f(\EA)\|$, and obtain that, with probability at least $1-5s^{-1} -\epsilon$,
    \begin{align}
        \|f(\bm A) - f(\EA)\| = \left \| \sum_{\ell=1}^R \alpha_\ell (\nA^\ell - \nEA^\ell) \right\| \leq \sum_{\ell=1}^R\alpha_\ell \left \| \nA^\ell - \nEA^\ell \right\|
        \leq \sum_{\ell=1}^R\ell\alpha_\ell  \left \| \nA - \nEA \right\| = M_f,
    \end{align}
    where the last inequality follows from~\Cref{T:ConcentrationPowersOfMatrix}.
    The rest of the argument remains the same, but with $M_f$ replacing $M$.
\end{proof}

\subsection{Implications for partition recovery}\label{sec:implications_inference}
In the following, we explain how the above concentration results translate into guarantees for the partition recovery problem. 
\Cref{sec:p_inference_prelim} provides intermediate results on the spectral geometry of the problem and discusses the precise definition of misclassified nodes.
The proof of~\Cref{T:result_misclassification} is then given in \Cref{sec:proof_p_recovery}.

\subsubsection{Partition recovery problem}\label{sec:p_inference_prelim}
In order to characterize recovery performance, we first clarify what we mean by a node being clustered correctly. 
To do this we follow the literature on SBMs~\cite{rohe2011spectral,qin2013regularized}.
Recall that within our partition recovery algorithm, we perform a $k$-means clustering procedure on the rows of the matrix $\widetilde{\bm W}$ of (row-normalized) dominant eigenvectors associated with the estimated covariance $\sCov$.
This procedure assigns each row $i$ to a centroid vector $\bm k_i$ corresponding to a cluster.

Let us denote the `true' centroid vector that a node $i$ is assigned to according to the ensemble covariance $\bm S(T)$ by $\bm \kappa_i$, where $i=1,\ldots,n$.
From the definition of $\bm S(T)$ it follows that each $\bm \kappa_i^\top$ corresponds to a row of the  matrix $\widetilde{\bm V}$ of the (row-normalized) top $k$ eigenvectors of $\nEA$.
Moreover, we know from \Cref{T:eigenvectors_and_partitions} that there are exactly $k$ different population centroids and that the association of rows to centroids corresponds to the true group structure of the underlying SBM.
However, if some of the eigenvalues of $\bm S(T)$ are repeated, the eigenvector matrix $\widetilde{\bm V}$ will not be uniquely defined, even though the subspace spanned by these vectors will be unique.
As we are interested in the distance between the vectors $\widetilde{\bm V}$ and their approximation via $\widetilde{\bm W}$, our definition of misclassification has to include an orthogonal matrix $\bm{\mathcal{Q}} \in \mathbb{R}^{k\times k}$ that minimizes the orthogonal Procrustes distance ${\|\widetilde{\bm W} - \widetilde{\bm V}\bm{\mathcal{Q}}\|_F}$ (see also~\cite{rohe2011spectral,qin2013regularized}).
In particular, this minimality property of $\bm{\mathcal{Q}}$ will be essential for~\Cref{T:Davis-Kahan}.

\begin{definition}[Set of misclassified nodes \cite{qin2013regularized}]\label{D:mis_nodes}
    The set of misclassified nodes $\mathbb{M}$ is defined as
    \begin{equation}\label{eq:misclassified_nodes}
        \mathbb{M} := \lbrace  i : \|\bm{k}_i^\top - \bm{\kappa}_i^\top\bm{\mathcal{Q}}\| > \|\bm{k}_i^\top - \bm{\kappa}_j^\top\bm{\mathcal{Q}}\| \text{ for some } j\neq i \rbrace.
    \end{equation}
\end{definition}
In words, this means that we call a node $i$ misclassified if the centroid $\bm k_i$ to which it has been assigned is closer to some other population centroid $\bm\kappa_j$ than it is to its true population centroid $\bm \kappa_i$.
Note that this definition of misclassified nodes adopted from the literature on SBMs~\cite{rohe2011spectral,qin2013regularized} implicitly assumes that i) we can solve the $k$-means clustering problem exactly and ii) the empirical centroids based on $\widetilde{\bm W}$ will be centered around the (appropriately rotated) population centroids.
We refer to the literature for additional discussion on the consistency of spectral clustering under the SBM~\cite{lei2015consistency}, and limit theorems for the eigenvectors of random graphs (see, e.g.,~\cite{athreya2017statistical} for a survey on random dot product graphs).

By accounting for the special geometry of the eigenvectors $\widetilde{\bm V}$, we can establish the following necessary condition for a node to be misclassified.

\begin{lemma}\cite{qin2013regularized}\label{T:M2misclassified_nodes}
    If a node $i$ is misclassified, then ${\|\bm{k}_i^\top - \bm{\kappa}_i^\top\bm{\mathcal{Q}}\| \ge 1/\sqrt{2}}$.
\end{lemma}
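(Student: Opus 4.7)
The plan is to exploit the very rigid geometry of the population centroids $\bm{\kappa}_i$, which by \Cref{T:eigenvectors_and_partitions} are rows of $\widetilde{\bm V} = \bm G \bm{\mathcal U}$ with $\bm{\mathcal U}$ orthogonal. This means each $\bm{\kappa}_i$ is actually a row of the $k \times k$ orthogonal matrix $\bm{\mathcal U}$ (indexed by the latent group label $g_i$), and hence the set of distinct population centroids is an orthonormal system in $\mathbb{R}^k$. In particular, for any two nodes $i,j$ either $\bm{\kappa}_i = \bm{\kappa}_j$ (when $g_i = g_j$) or $\bm{\kappa}_i \perp \bm{\kappa}_j$ (when $g_i \ne g_j$), so $\|\bm{\kappa}_i - \bm{\kappa}_j\| \in \{0, \sqrt{2}\}$. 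Since $\bm{\mathcal Q}$ is orthogonal, these pairwise distances are preserved after applying $\bm{\mathcal Q}$ on the right.

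The argument then proceeds in two short steps. First, I would observe that if $i \in \mathbb{M}$, the $j \neq i$ witnessing misclassification in \Cref{D:mis_nodes} must belong to a different group than $i$: if $g_j = g_i$ we would have $\bm{\kappa}_j = \bm{\kappa}_i$ and the defining strict inequality $\|\bm k_i^\top - \bm{\kappa}_i^\top \bm{\mathcal Q}\| > \|\bm k_i^\top - \bm{\kappa}_j^\top \bm{\mathcal Q}\|$ would be self-contradictory. Second, with $g_j \ne g_i$ in hand, I would apply the triangle inequality together with the orthogonality observation:
\begin{equation*}
\sqrt{2} \;=\; \|\bm{\kappa}_i^\top \bm{\mathcal Q} - \bm{\kappa}_j^\top \bm{\mathcal Q}\| \;\le\; \|\bm k_i^\top - \bm{\kappa}_i^\top \bm{\mathcal Q}\| + \|\bm k_i^\top - \bm{\kappa}_j^\top \bm{\mathcal Q}\| \;<\; 2\,\|\bm k_i^\top - \bm{\kappa}_i^\top \bm{\mathcal Q}\|,
\end{equation*}
where the final strict inequality comes from the defining property of $\mathbb{M}$. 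Dividing by $2$ yields $\|\bm k_i^\top - \bm{\kappa}_i^\top \bm{\mathcal Q}\| > 1/\sqrt{2}$, which implies the stated bound.

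There is essentially no analytic obstacle here; the entire content of the lemma is the geometric fact that the population centroids, viewed up to the global orthogonal rotation $\bm{\mathcal Q}$, are either identical or at distance $\sqrt{2}$ from one another, which forces any centroid assignment that is closer to the wrong population centroid to lie at least halfway between two orthonormal vectors. The only point requiring a bit of care is to justify that $\bm{\mathcal U}$ indeed supplies an orthonormal basis on the relevant subspace (this is precisely the content of \Cref{T:eigenvectors_and_partitions}) and that the Procrustes-optimal $\bm{\mathcal Q}$ preserves these distances, after which the triangle inequality closes the argument. No concentration result is invoked; the lemma is a purely deterministic statement about the spectral geometry.
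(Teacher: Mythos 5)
Your proposal is correct and follows essentially the same route as the paper: the orthonormality of the population centroids from \Cref{T:eigenvectors_and_partitions} gives the $\sqrt{2}$ separation, and the triangle inequality combined with the defining strict inequality of $\mathbb{M}$ closes the argument. The only (welcome) addition is your explicit remark that the witnessing $j$ must lie in a different block, a point the paper's proof leaves implicit.
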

\begin{proof}
    First, we know from~\Cref{T:eigenvectors_and_partitions} that $\widetilde{\bm V} = \bm G \bm{\mathcal U}$ for some orthogonal matrix $\bm{\mathcal U}$.
    Hence, all population centroids $\bm \kappa_i$ are of unit length and orthogonal to each other.
    In particular, for all $j$ and $i$ belonging to different blocks/groups, we have
    \begin{equation}\label{eq:sqrt2distance}
        \|\bm{\kappa}_i^\top-\bm{\kappa}_j^\top\| = \|(\bm{g}_i^\top - \bm{g}_j^\top)\bm{\mathcal U}\| = \sqrt{2},
    \end{equation}
    where $\bm{g}_i$ is the true population cluster indicator vector corresponding to node $i$ (see~\Cref{T:eigenvectors_and_partitions}).

    Second, by the triangle inequality and the orthogonality of $\bm{\mathcal{Q}}$, we know that for all $j$ and $i$ in different blocks, we have
    \begin{equation}\label{eq:triangle_centroids}
        \|\bm{\kappa}_i^\top-\bm{\kappa}_j^\top\| \le \|\bm{\kappa}_i^\top\bm{\mathcal{Q}}-\bm{k}_i^\top\| + \|\bm{k}_i^\top -\bm{\kappa}_j^\top\bm{\mathcal{Q}}\|.
    \end{equation}
    
    Plugging in the result from~\eqref{eq:sqrt2distance} into~\eqref{eq:triangle_centroids}, we obtain that $\|\bm{k}_i^\top - \bm\kappa_j^\top\bm{\mathcal Q}\| \ge \sqrt{2} - \|\bm{k}_i^\top - \bm\kappa_i^\top\bm{\mathcal Q}\|$. 
    From Definition~\ref{D:mis_nodes}, for a misclassified node $i$ there exists some $j$ such that
    \begin{equation*}
        \|\bm{k}_i^\top - \bm{\kappa}_i^\top\bm{\mathcal{Q}}\| > \|\bm{k}_i^\top - \bm{\kappa}_j^\top\bm{\mathcal{Q}}\| \ge \sqrt{2} - \|\bm{k}_i^\top - \bm\kappa_i\bm{\mathcal Q}\|,
    \end{equation*}
    where the last inequality is true by our argument above.
    The result follows after some simple algebraic manipulations.
\end{proof}

To arrive at our result for the misclassification rate in~\Cref{T:result_misclassification}, we further follow closely the literature on SBMs~\cite{rohe2011spectral,qin2013regularized,Abbe2018}, and establish the following two lemmas (Lemmas~\ref{T:KmatrixVsV} and \ref{T:Davis-Kahan}).
The first lemma provides us with a relationship between the difference of the cluster centroids obtained from the $k$-means clustering and the true population cluster on the one side, and the difference between the eigenvectors of $\sCov$ and $\bm S(T)$ on the other side.
\begin{lemma}\label{T:KmatrixVsV}
    Denote the matrix of cluster centroids obtained by performing k-means on the rows of $\widetilde{\bm W}$ as ${\bm K := [\bm k_1, \ldots, \bm k_n]^\top \in \mathbb{R}^{n \times k}}$.
    Then,
    \begin{equation}
        \|\bm K - \widetilde{\bm V} \bm{\mathcal Q}\|_F \le 2 \|\widetilde{\bm W}  - \widetilde{\bm V}\bm{\mathcal Q}\|_F.
    \end{equation}
\end{lemma}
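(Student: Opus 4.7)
The plan is to exploit the optimality of the $k$-means procedure that produces $\bm K$, combined with a single application of the triangle inequality. The key observation is that $\widetilde{\bm V}\bm{\mathcal Q}$ has only $k$ distinct rows: indeed, by \Cref{T:eigenvectors_and_partitions}, $\widetilde{\bm V} = \bm G \bm{\mathcal U}$, and multiplying on the right by the orthogonal matrix $\bm{\mathcal Q}$ preserves this structure since rows of $\widetilde{\bm V}$ belonging to the same block are identical. Hence $\widetilde{\bm V}\bm{\mathcal Q}$ corresponds to a \emph{feasible} assignment of each row of $\widetilde{\bm W}$ to one of $k$ candidate centroid vectors (namely the $k$ distinct rows of $\widetilde{\bm V}\bm{\mathcal Q}$).

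First, I would invoke the defining property of the $k$-means output $\bm K$: it minimizes the sum of squared Euclidean distances from the rows of $\widetilde{\bm W}$ to an assignment of $k$ centroids, which is precisely $\|\widetilde{\bm W} - \bm K\|_F^2$. Comparing with the feasible competitor $\widetilde{\bm V}\bm{\mathcal Q}$ (treated as a matrix of centroid assignments, with at most $k$ unique rows), we conclude
\begin{equation*}
\|\widetilde{\bm W} - \bm K\|_F \le \|\widetilde{\bm W} - \widetilde{\bm V}\bm{\mathcal Q}\|_F.
\end{equation*}
This requires a brief justification that any matrix with at most $k$ distinct rows is a valid candidate in the $k$-means objective; this is essentially by definition of $k$-means applied to the rows of $\widetilde{\bm W}$ (exact solution is assumed, consistent with the standing assumption already made in \Cref{D:mis_nodes}).

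Second, I would apply the triangle inequality in Frobenius norm:
\begin{equation*}
\|\bm K - \widetilde{\bm V}\bm{\mathcal Q}\|_F \le \|\bm K - \widetilde{\bm W}\|_F + \|\widetilde{\bm W} - \widetilde{\bm V}\bm{\mathcal Q}\|_F \le 2\|\widetilde{\bm W} - \widetilde{\bm V}\bm{\mathcal Q}\|_F,
\end{equation*}
where the second inequality uses the bound from the previous step. This concludes the argument.

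The main obstacle, if any, is purely conceptual rather than technical: one must be careful about what $\bm K$ represents (an $n \times k$ matrix whose $i$-th row is the centroid assigned to data point $i$, not the $k \times k$ matrix of distinct centroids) and must verify that $\widetilde{\bm V}\bm{\mathcal Q}$ indeed constitutes a valid $k$-partition assignment. Both points follow directly from \Cref{T:eigenvectors_and_partitions}, so no serious difficulty arises. The argument is essentially the standard two-line reduction used throughout the spectral clustering literature (cf.~\cite{rohe2011spectral,qin2013regularized}).
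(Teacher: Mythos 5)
Your proof is correct and follows essentially the same route as the paper's: the triangle inequality combined with the $k$-means optimality bound $\|\widetilde{\bm W} - \bm K\|_F \le \|\widetilde{\bm W} - \widetilde{\bm V}\bm{\mathcal Q}\|_F$. Your added remark that $\widetilde{\bm V}\bm{\mathcal Q}$ has at most $k$ distinct rows (hence is a feasible competitor for the exact $k$-means objective) is a worthwhile clarification that the paper leaves implicit.
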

\begin{proof}
    By the triangle inequality,
    \begin{align*}
        \|\bm K - \widetilde{\bm V} \bm{\mathcal Q}\|_F \le \|\bm K - \widetilde{\bm W}\|_F + \|\widetilde{\bm W} -\widetilde{\bm V}\bm{\mathcal Q}\|_F.
    \end{align*}
    From the definition of $k$-means, however, we know that $\|\widetilde{\bm W} - \bm K\|_F \le \|\widetilde{\bm W} -\widetilde{\bm V}\bm{\mathcal Q}\|_F$ as $\bm K$ is chosen in order to minimize the distances of the centroids to the individual rows.
    Hence, the claim follows.
\end{proof}

The second lemma builds on the Davis-Kahan theorem~\cite{davis1970rotation,yu2014useful}, and bounds the difference between the row-normalized eigenvectors of $\sCov$ and $\bm S(T)$.

\begin{lemma}[Davis-Kahan]\label{T:Davis-Kahan}
    Following the definitions in~\Cref{T:result_misclassification}, we have
    \begin{equation}
        \|\widetilde{\bm W}  - \widetilde{\bm V}\bm{\mathcal Q}\|_F \le \frac{\sqrt{8k}\|\sCov - \bm S(T)\|}{\tau \xi_k}.
    \end{equation}
    where we recall (see \Cref{T:result_misclassification}) $\xi_k$ is the $k$-th top eigenvalue of the ensemble covariance matrix, and $\tau$ is be the minimum 2-norm of any row of $\bm{W}$ and $\bm{V}$.
\end{lemma}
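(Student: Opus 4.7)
The plan is to decompose the bound into two pieces: first apply the Davis-Kahan $\sin\Theta$ theorem to control the distance between the raw eigenvector matrices $\bm W$ and $\bm V$, and second pay a price (depending on $\tau$) for passing from the raw eigenvectors to their row-normalized counterparts $\widetilde{\bm W}$ and $\widetilde{\bm V}$.

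For the first step, I would invoke the variant of the Davis--Kahan theorem due to Yu, Wang, and Samworth, which for orthonormal matrices of top-$k$ eigenvectors gives
\begin{equation*}
\|\bm W - \bm V \bm{\mathcal Q}\|_F \le \frac{2\sqrt{2k}\,\|\sCov - \bm S(T)\|}{\xi_k - \xi_{k+1}},
\end{equation*}
where $\bm{\mathcal Q}$ is the orthogonal matrix realizing the Procrustes minimum. Since $\bm S(T) = f(\EA)^{2T}$ and $\bm{\mathcal A}$ has rank $k$ by standing assumption, $\bm S(T)$ has rank at most $k$, so $\xi_{k+1} = 0$ and the denominator reduces to $\xi_k$.

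For the second step, I would establish, row by row, the elementary unit-vector inequality
\begin{equation*}
    \left\| \frac{\bm a}{\|\bm a\|} - \frac{\bm b}{\|\bm b\|} \right\|^2
    = \frac{\|\bm a - \bm b\|^2 - (\|\bm a\| - \|\bm b\|)^2}{\|\bm a\| \|\bm b\|}
    \le \frac{\|\bm a - \bm b\|^2}{\|\bm a\|\,\|\bm b\|},
\end{equation*}
obtained by expanding $\|\bm a - \bm b\|^2$ in terms of the inner product $\langle \bm a, \bm b\rangle$. Applying this with $\bm a = \bm w_i$ and $\bm b = (\bm V \bm{\mathcal Q})_i = \bm v_i \bm{\mathcal Q}$, and using that $\bm{\mathcal Q}$ is orthogonal so $\|\bm v_i \bm{\mathcal Q}\| = \|\bm v_i\|$, each row satisfies $\|\widetilde{\bm w}_i - \widetilde{\bm v}_i \bm{\mathcal Q}\| \le \|\bm w_i - \bm v_i \bm{\mathcal Q}\|/\sqrt{\|\bm w_i\|\|\bm v_i\|} \le \|\bm w_i - \bm v_i \bm{\mathcal Q}\|/\tau$, by the definition of $\tau$. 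Summing the squared row-norms yields $\|\widetilde{\bm W} - \widetilde{\bm V}\bm{\mathcal Q}\|_F \le \|\bm W - \bm V \bm{\mathcal Q}\|_F / \tau$.

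Combining the two steps gives
\begin{equation*}
\|\widetilde{\bm W} - \widetilde{\bm V} \bm{\mathcal Q}\|_F
\le \frac{1}{\tau}\cdot \frac{2\sqrt{2k}\,\|\sCov - \bm S(T)\|}{\xi_k}
= \frac{\sqrt{8k}\,\|\sCov - \bm S(T)\|}{\tau\, \xi_k},
\end{equation*}
which is the claimed bound. The only non-routine point is keeping the constants tight: one must use the sharper row-normalization inequality above (which gives a factor $1/\tau$ rather than the cruder $2/\tau$ one would get from an additive triangle-inequality argument) in order to match the $\sqrt{8k}$ constant in the statement; additionally, one should verify that the orthogonal matrix $\bm{\mathcal Q}$ appearing in the Davis--Kahan bound coincides with the one used to define misclassification, which it does by construction since both are defined as Procrustes minimizers.
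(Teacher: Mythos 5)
Your proposal is correct and follows essentially the same route as the paper: the Yu--Wang--Samworth form of Davis--Kahan for the unnormalized eigenvector matrices (with $\xi_{k+1}=0$ by the rank-$k$ structure), followed by a $1/\tau$ penalty for row normalization; your explicit unit-vector inequality in fact fills in a step the paper dispatches with the phrase ``row-normalization corresponds to diagonal scaling.'' One small imprecision: the Procrustes minimizers for the normalized and unnormalized problems need not coincide, but the chain of inequalities still closes because $\bm{\mathcal Q}$ minimizes the normalized distance, so bounding $\|\widetilde{\bm W}-\widetilde{\bm V}\bm{\mathcal Q}'\|_F$ for the unnormalized minimizer $\bm{\mathcal Q}'$ suffices.
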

\begin{proof}
    The fact that
    \begin{equation*}
        \|\bm W  - \bm V\bm{\mathcal Q}\|_F \le \frac{\sqrt{8k}\|\sCov - \bm S(T)\|}{\xi_k},
    \end{equation*}
    holds for the unnormalized versions of $\bm W$ and $\bm V$ is a restatement of the Davis-Kahan Theorem as can be found in, e.g.,~\cite[Theorem 2]{yu2014useful}.

    By definition, we have $\widetilde{\bm V} = \diag(\bm V \bm V^\top)^{-1/2}\bm V$ and $\widetilde{\bm W} = \diag(\bm W \bm W^\top)^{-1/2}\bm W$.
    Let us define $\tau_i := \min\{[\bm V \bm V^\top]_{ii}^{1/2},[\bm W \bm W^\top]_{ii}^{1/2}\}$, which is the smaller of the 2-norms of the $i$-th rows of $\bm V$ and $\bm W$, respectively.
    Based on this definition of $\tau_i$ we see that $\tau = \min_i \tau_i$.
    Now, as row-normalization corresponds to diagonal scaling we obtain
    \begin{equation*}
        \|\widetilde{\bm W}  - \widetilde{\bm V}\bm{\mathcal Q}\|_F \le \frac{1}{\tau}\|\bm W  - \bm V\bm{\mathcal Q}\|_F \le \frac{\sqrt{8k}\|\sCov - \bm S(T)\|}{\tau\xi_k}.
    \end{equation*}
\end{proof}

\subsubsection{Proof of \Cref{T:result_misclassification}}\label{sec:proof_p_recovery}
Based on the results established thus far, we can provide a proof of the bound of the misclassification rate $q :=|\mathbb{M}|/n$ as stated in~\Cref{T:result_misclassification}.
Our proof parallels results for the misclassification rate if the graph is observed~\cite{qin2013regularized}.
\begin{proof}
    We define the set $\mathbb{M}_{2} := \{i : \|\bm{k}_i^\top - \bm{\kappa}_i^\top\bm{\mathcal{Q}}\| \ge 1/\sqrt{2} \} \supset \mathbb{M}$, which includes all misclassified nodes according to~\Cref{T:M2misclassified_nodes}.
    From~\Cref{T:KmatrixVsV,T:Davis-Kahan}, and~\Cref{T:main_result}, it follows that
    \begin{align*}
        q &= \frac{|\mathbb{M}|}{n} \le \frac{|\mathbb{M}_2|}{n} = \frac{1}{n}\sum_{i \in \mathbb{M}_2} 1 \le \frac{2}{n}\sum_{i \in \mathbb{M}_2} \|\bm{k}_i^\top - \bm{\kappa}_i^\top\bm{\mathcal{Q}}\|^2  =   \frac{2}{n}\|\bm K - \widetilde{\bm V}\bm{\mathcal{Q}}\|_F^2 \\
          & \le \frac{8}{n}\| \widetilde{\bm W} -\widetilde{\bm V} \bm{\mathcal{Q}}\|_F^2   \le \frac{8}{n} \left(\frac{\sqrt{8k}\|\sCov - \bm S(T)\|}{\tau\xi_k}\right)^2. 
        \end{align*}
\end{proof}

\subsection{Implications for parameter estimation}\label{sec:implications_inference_parameters}
Here we show how our concentration results allow us to prove \Cref{T:parameter_estimation_result}.

\begin{proof}
   Note that we can write the relative estimation error as:
   \begin{align*}
       \frac{|\hat{a} -a |}{a} &= \frac{|\lambda_2(\sCov)^{1/2T} \rho n  - \lambda_2(\bm S(T))^{1/2T} \rho n| }{\lambda_2(\bm S(T))^{1/2T} \rho n + \rho n} = \frac{|\lambda_2(\sCov)^{1/2T}  - \lambda_2(\bm S(T))^{1/2T}|}{\lambda_2(\bm S(T))^{1/2T} + 1} \\
                               &\le \frac{|\lambda_2(\sCov)  - \lambda_2(\bm S(T))|^{1/2T}}{\lambda_2(\nEA) + 1} \le \frac{\|\sCov - \bm S(T)\|^{1/2T}}{\lambda_2(\nEA) + 1}. \\
   \end{align*}
   The first inequality follows from Jensen's inequality since the function $f(x) = x^{1/2T}$ is concave. The second inequality is a direct application of Weyl's inequality.
   Hence, using \Cref{T:main_result} it follows that with probability at least $1-5s^{-1} - 2\epsilon$,
   \begin{equation*}
       \eta \le \frac{(2TM +B(T) )^{1/2T}}{\lambda_2(\nEA) + 1}.
   \end{equation*}
\end{proof}

\section{Numerical experiments}\label{sec:numerical_exp}
In this section we present some computational experiments with our algorithms for partition recovery and parameter estimation, to demonstrate their performance in practice.

\subsection{Partition recovery}

\begin{figure*}[tb!]
    \centering
    \includegraphics[]{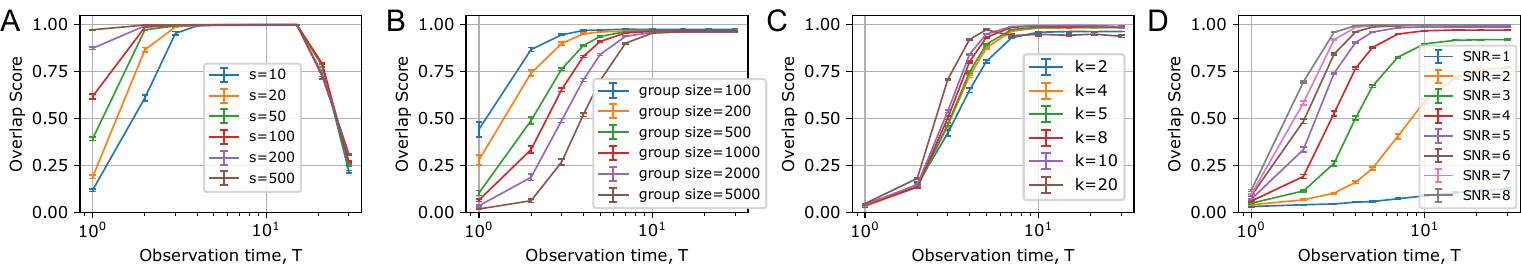}
    \caption{\textbf{Partition recovery performance for SBM inference}.
    \textbf{A-D.}
    The recovery performance of~\Cref{alg} is plotted against the observation time at which the samples are drawn.
    Each point in the plot corresponds to an average over 30 experiments, and the error bars denote the standard error of the mean. 
    Within~\Cref{alg} we run k-means $10$ times and pick the best result in terms of the k-means objective function as our final partition.
    \textbf{A.} Recovery performance for different numbers of samples, for the SBM discussed in the text.
    \textbf{B.} Recovery performance in a two-group planted partition model (see~\Cref{ex:example1}) for different group sizes ($\text{SNR}=5,\ s=50,\ \rho=30/n$).
    \textbf{C.} Recovery performance in a $k$-group planted partition model (see~\Cref{ex:example1}) for different numbers of groups ($n=5000, \ \text{SNR}=5,\ s=50,\ \rho=30/n$).
    \textbf{D.} Recovery performance in a $k=5$ group planted partition model (see~\Cref{ex:example1}) for different SNRs ($n=2000,\ s=50,\ \rho=30/n$).
    }
    \label{fig:part_rec1}
\end{figure*}
To gain intuition about the qualitative features of the partition recovery problem, we first consider the following example SBM consisting of $5$ groups with varying sizes $\bm n_g = [200,300,200,300,280]^\top$, and affinity matrix
\begin{equation*}\footnotesize
    \bm \Omega = 
    \begin{bmatrix}
        0.5& 0.3& 0.2& 0.1& 0.2\\ 
        0.3& 0.5& 0.2& 0.2& 0.2\\
        0.2& 0.2& 0.4& 0.2& 0.1\\
        0.1& 0.2& 0.2& 0.4& 0.2\\
        0.2& 0.2& 0.1& 0.2& 0.4\\
    \end{bmatrix}.
\end{equation*}
The partition recovery performance is  shown in~\Cref{fig:part_rec1}A for a varying number of samples $s$, as a function of the observation time $T$.
Here, the recovery performance is measured via the overlap score $Z$ defined as:
\begin{equation}\label{eq:overlap}
    Z = \frac{z_\text{actual} - z_\text{chance}}{1-z_\text{chance}},
\end{equation}
where  $z_\text{chance}$ is the probability of guessing the group assignments correctly and $z_\text{actual}$ is the fraction of nodes that are correctly classified.
The guessing probability is computed as $z_\text{chance} = max(\bm n_g)/n$, where $\max(\bm n_g)$ is the size of the largest group. 
Note that $z_\text{chance}$ is simply $\frac{1}{\#\text{groups}}$ for groups of equal size.
The actual overlap $z_\text{actual}$ is computed by solving a linear $k\times k$ optimal assignment problem between the inferred group labels and the planted partition.

Overall, we can infer the partitions remarkably well using our simple spectral inference strategy, even though we never observe the actual network.
The results in \Cref{fig:part_rec1}A display a couple of noteworthy features consistent with our technical results, which are worth examining.
Specifically, we see that there is a pronounced effect of the observation time: both for small as well as very large $T$, the performance is markedly worse than for intermediate times.
Thus, if we have a fixed budget for the number of samples, choosing the `right' observation time can improve recovery performance.
Obtaining more samples, in contrast, has a more moderate effect and is most beneficial for small times.

At an intuitive level, these results can be explained by viewing the network's effects as a linear filter.
On the one hand, for short observation times, the original white noise input is only filtered weakly and thus contains still a large noise component, which is detrimental to recovery performance.
On the other hand, as the output approaches its stationary state, the information that the signal carries about the network structure is completely washed out.
For intermediate times, however, there is a trade-off between attenuating the noise within the observed signal, and not losing too much information about the partition structure.
\Cref{T:main_result,T:result_misclassification} quantify precisely how this trade-off is governed by the spectral features of the network.

Using a planted partition model with an affinity matrix of the form $\bm \Omega = \frac{(a-b)}{n}\bm I_k + \frac{b}{n}\bm 1^{}_k\bm 1_k^\top$ (see~\Cref{ex:example1}), we further investigate 
(see~\Cref{fig:part_rec1}B-D). numerically how the partition recovery performance is affected by the group sizes, the number of groups, and the signal-to-noise ratio (SNR) defined by~\cite{Abbe2018},
\begin{equation}
    \text{SNR} = \frac{(a-b)^2}{ka + k(k-1)b}.
\end{equation}
We note that that $\text{SNR}=1$ corresponds to the well-known detectability limit in the sparse regime of the SBM, for asymptotically large networks, with $n\rightarrow \infty$ (see~\cite{Abbe2018} for a detailed review).

\Cref{fig:part_rec1}B shows that for a fixed number of groups (here $k=2$), the group sizes have a noticeable effect on recovery performance. On the other hand, if we fix the number of nodes, changing the number of groups does not have a noticeable effect. 
This hints at the fact that the number of nodes $n$ (and thus the number of nodes per group) is one of the key drivers of the difficulty of partition recovery.
Finally, in~\Cref{fig:part_rec1}D we plot the influence of the SNR on the detection performance.
As expected, a smaller SNR results in a dramatically worse  performance.
Remarkably, for $\text{SNR}=4$, using only $s=50$ samples leads to an almost perfect partition recovery,  for a suitable observation time.

\begin{figure*}[tb!]
    \centering
    \includegraphics[]{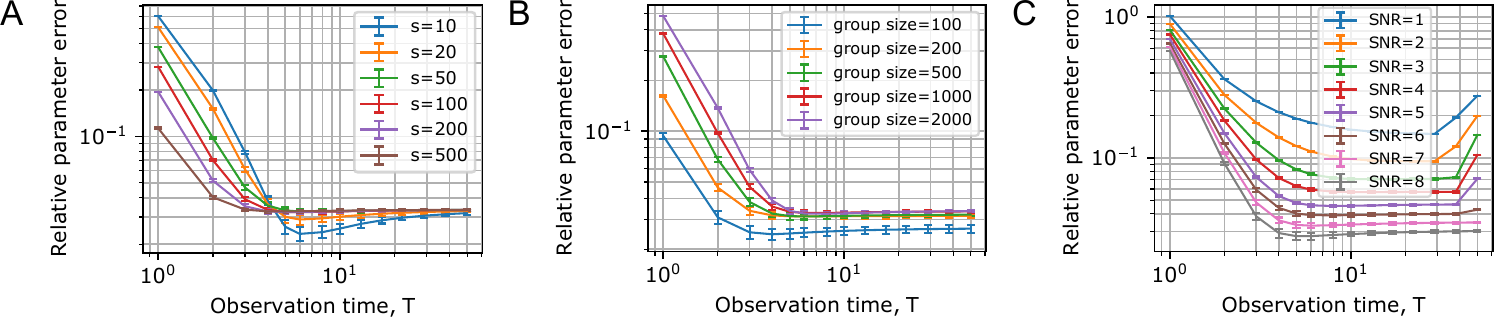}
    \caption{\textbf{Parameter recovery performance for SBM inference with an eigenvalue-based estimation strategy}.
    \textbf{A-C.}
    The recovery performance of~\Cref{alg2} in terms of the relative error $\eta = |a - \hat{a}| / a$ is plotted against the observation time at which the samples are drawn.
    Each point corresponds to an average over 30 experiments and the error bars denote the standard error of the mean. 
    \textbf{A.} Parameter recovery performance for varying number of samples for the SBM with $k=2$ groups as discussed in the text ($n=2000, \text{SNR}=7,\rho=30/n$).
    \textbf{B.} Parameter recovery performance in a $k=2$ group planted partition model (see~\Cref{ex:example1}) for varying group sizes ($\text{SNR}=7,s=50,\rho=30/n$).
    \textbf{C.} Parameter recovery performance in a $k=2$ group planted partition model (see~\Cref{ex:example1}) for different SNR ($n=2000,s=50,\rho=30/n$).}
    \label{fig:parameter_recovery}
\end{figure*}

\subsection{Model parameter estimation}
We now demonstrate numerically that we can infer the model parameters of the SBM using only the observed outputs.
For simplicity, we focus on the case of the planted partition model with $k=2$ groups of equal size, and assume that the average expected network density $\rho$, or equivalently the average expected mean degree $d_\text{av} = \rho n$, is known a priori (see also the discussion in the context of~\Cref{ex:example1} and~\Cref{sec:th_param_est}).

Since the estimation error is not directly related to 
 the partition recovery performance, let us first focus on the eigenvalue-based parameter estimation strategy, using the estimator $\hat a$ for the parameter $a$ given by \Cref{eq:estimator_lambda} (see~\Cref{sec:th_param_est}). 

\Cref{fig:parameter_recovery} shows the results of applying this parameter estimation scheme in numerical experiments. 
To quantify the parameter recovery performance we compute the relative recovery error $\eta = |a - \hat{a}| / a$. 
Similar to partition recovery, we see in~\Cref{fig:parameter_recovery}A that the observation time has a strong effect on the recovery performance and we also see a marked decrease in the error from more than $10\%$ initially, to around $3\%$ for longer observation times.

\begin{figure*}[tb!]
    \centering
    \includegraphics[]{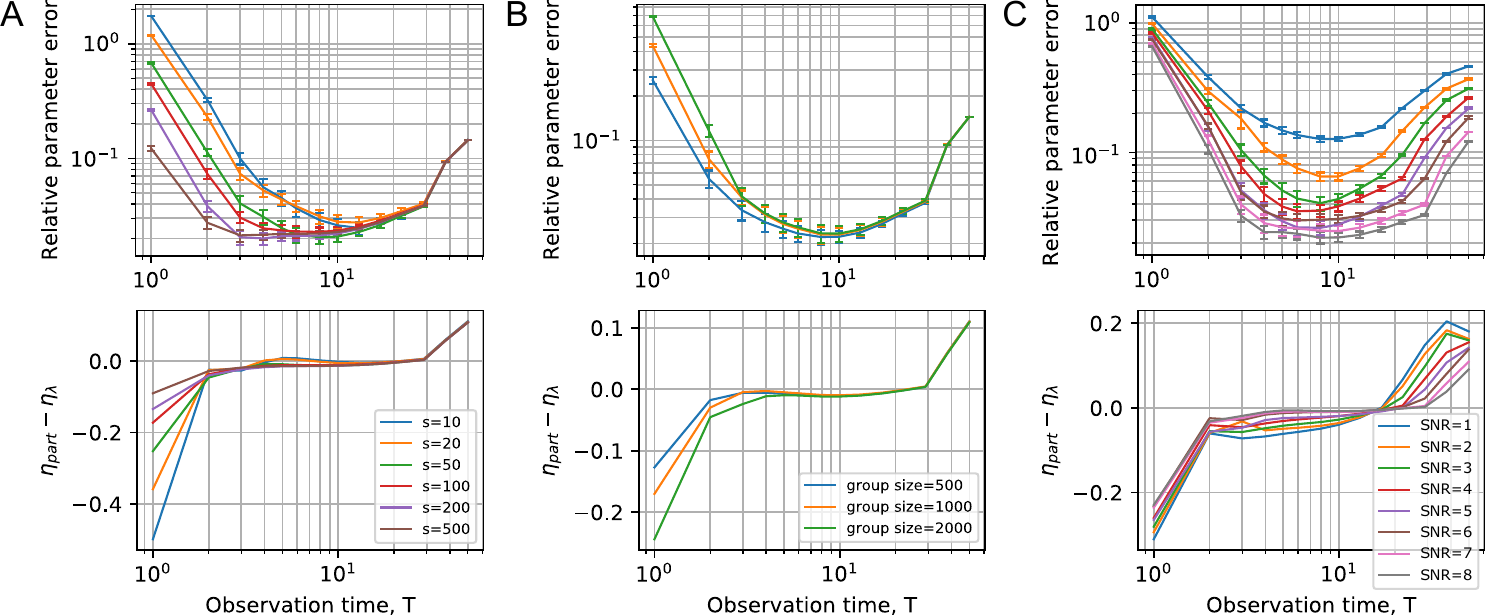}
    \caption{\textbf{Comparison of parameter recovery performance with different parameter estimation strategies}.
    \textbf{A-C.}
    The recovery performance of~\Cref{alg2} in terms of the relative error $\eta = |a - \hat{a}| / a$ is plotted against the observation time at which the samples are drawn.
    Top row: relative parameter estimation error $\eta_\text{part}$ of the estimation procedure based on first inferring a partition of the underlying SBM.
    Each point corresponds to an average over 30 experiments and the error bars denote the standard error of the mean.
    Bottom row: difference between relative parameter estimation error $\eta_\text{part}$ of the estimation procedure based on first inferring a partition from the data and the relative error $\eta_\lambda$ of the eigenvalue-based parameter inference strategy (see text). 
    \textbf{A.}~Parameter recovery performance for varying number of samples for the SBM with $k=2$ groups as discussed in the text ($n=4000, \text{SNR}=7,\rho=30/n$).
    \textbf{B.}~Parameter recovery performance in a $k=2$ group planted partition model (see~\Cref{ex:example1}) for varying group sizes ($\text{SNR}=7,s=50,\rho=30/n$).
    \textbf{C.}~Parameter recovery performance in a $k=2$ group planted partition model (see~\Cref{ex:example1}) for different SNR ($n=4000,s=50,\rho=30/n$).}
    \label{fig:parameter_recovery2}
\end{figure*}

\Cref{fig:parameter_recovery}B-C demonstrates the effect of the group size and the SNR on recovery performance.
Similar to the partition recovery problem, we observe that smaller group sizes (a smaller network) and a larger SNR lead to better performance.

Let us now consider a different parameter estimation strategy based on first recovering the groups in the SBM and then trying to recover the parameters based on the entries of the sample covariance matrix (see discussion in previous sections).
We concentrate again on the planted partition model with two equally-sized groups and a given density $\rho = (a+b)/(2n)$ as above.

Specifically, we estimate the parameter using the following approach.
First, we estimate the group labels $g_i$ of the nodes using~\Cref{alg} and sort the indices of the sample covariance $\sCov$ accordingly.
Next, we take the $t$-th root of $\sCov$ (via an SVD), and then compute the average magnitude of all off-diagonal terms that relate different groups, to obtain the following statistic:
\begin{equation}
    x = \frac{1}{|\{(i,j): g_i \neq g_j\}|}\sum_{(i,j): g_i \neq g_j} [\sCov^{1/T}]_{ij}.
\end{equation}

From~\Cref{E:example_entries}) we know that 
\begin{equation}\label{E:cov_entries}
[\sCov^{1/T}]_{ij} \approx 
\begin{cases}
\frac{1}{n}\left(\frac{a-b}{a+b}\right)^{2} + \frac{1}{n} & \text{if } g_i = g_j,\\
-\frac{1}{n}\left(\frac{a-b}{a+b}\right)^{2} + \frac{1}{n} & \text{if } g_i \neq g_j,
\end{cases}
\end{equation}
which implies that $x \approx -\frac{1}{n}\left(\frac{a-b}{a+b}\right)^{2} + \frac{1}{n}$.
We then use the following estimator for the parameter $a$:
\begin{equation}
    \hat a_\text{part} = \rho n\sqrt{1-nx} + \rho n
\end{equation}

Figure~\Cref{fig:parameter_recovery2} displays the performance of this estimator in terms of the error $\eta_\text{part} = |\hat{a}_\text{part} - a|/a$, in comparison to the relative error $\eta_\lambda= |\hat{a} - a|/a$ of the eigenvalue-based estimation strategy.

\subsection{Real-world networks}
In this section, we examine the performance of our methods on real-world networks that are not known to be generated from an SBM.
In this final set of numerical experiments, we test two real-world networks, on which we assume a diffusion dynamics of the form~\eqref{eq:observation_model_simple} takes place.

The first network we study is the well-known Karate-club network which was initially charted by Zachary~\cite{Zachary1977}, describing the social interactions among 34 individuals of a Karate-club that eventually split into two factions.
We use this split as our `ground-truth' partition to compare against.
Although there is \emph{a priori} no reason for why the split should be correlated with the graph structure~\cite{Peel2017}, this finding is now well-supported in the literature.

The results of applying our procedure for a range of different sampling times and number of samples available are shown in~\Cref{fig:real_world}A.
Interestingly, we find that already with three samples, we can achieve a recovery performance that is as good as applying a corresponding spectral clustering procedure to the full network --- which in this case corresponds to making one classification error with respect to the reported fission of the club.

As a second example, we study a social interaction network consisting of the friendship relationships between students at Caltech, a dataset that was originally studied by Traud et al.~\cite{Traud2011,Traud2012} in a comparative study of a larger set of such networks.
While there is again no clear `ground truth' partition of the network, some additional node covariates such as gender or the major of the students are examined as part of their study~\cite{Traud2011,Traud2012}.
Traud et al.~\cite{Traud2011,Traud2012} observed that in the context of the social network of Caltech students the node covariate `house membership' (dormitories) correlates strongly with a putative community structure in the network, even though the house-membership cannot explain the structure of the network completely.

Motivated by these finding, we study the partition recovery performance with respect to the partition of the nodes induced by the house-membership data in~\Cref{fig:real_world}B-C.
Here we limit ourselves to studying the largest connected component of the network, where we remove all the nodes for which no house-membership label is available.
This results in a network of $594$ nodes split between 8 houses (groups).
As the structure of the network is not completely explained by the house-membership data, we recover the induced partition only with an overlap score of $0.65$ (see \Cref{fig:real_world}B).
We also show our recovery performance relative to the partition we would obtain from the spectral clustering procedure that uses $\bm L$ (i.e., knowledge of the complete network) instead of $\sCov$ for clustering.
As can be seen in~\Cref{fig:real_world}C, our recovered partition is almost perfectly correlated with the result of the clustering-based procedure, highlighting that our method  indeed performs as expected and that the low score with respect to the house-membership partition is largely due to a lack of a correlation between the house membership variable and the network block structure.

\begin{figure*}[tb!]
    \centering
    \includegraphics[]{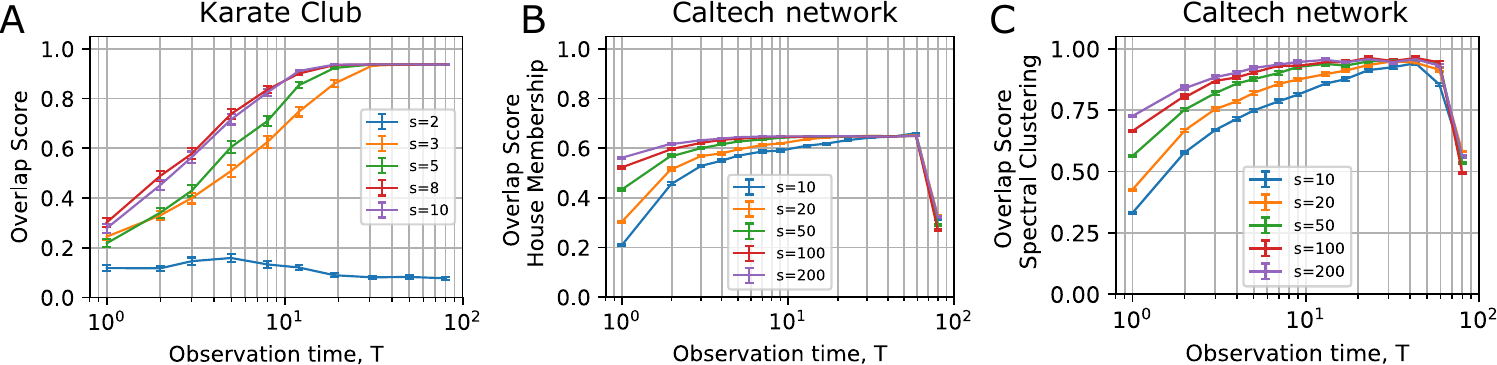}
    \caption{\textbf{Recovering partition structure from dynamics on real-world networks}.
    Each point corresponds to an average over 200 experiments. The error bars denote the standard error of the mean. 
    \textbf{A.}~Partition recovery performance of our method on the Karate club network with respect to the partition induced by the final split of the Karate club, as reported by Zachary~\cite{Zachary1977} (see text).
    \textbf{B.}~Partition recovery performance of our method on the social network of Caltech students, with respect to the partition induced by the house membership of the students, as reported by Traud et al.~\cite{Traud2011,Traud2012}.
    \textbf{C.}~Partition recovery performance of our method on the social network of Caltech students, with respect to the partition obtained from spectral clustering on the fully observed network.}
    \label{fig:real_world}
\end{figure*}

\section{Discussion}\label{sec:discussion}
Graph-based tools have become prevalent for the analysis of a range of different systems across the sciences. 
Network inference, the problem of determining the topology of a network based on a set of nodal observations, has accordingly become a paramount tool to enable this type of analysis.
In this work, we have advocated a fresh look onto this problem for applications in which exact inference is impossible, e.g., due to sampling constraints.
Instead of inferring the exact network in terms of its adjacency matrix, we infer a generative model of the network structure, thereby learning a probability distribution over networks.
Importantly, for a low-dimensional statistical model such as the SBM discussed here, this procedure can be implemented using far fewer samples than would be required to infer the exact network structure.

The proposed method is especially relevant for scenarios in which we aim to analyze the network further from a statistical perspective, e.g., if our ultimate goal is to perform community detection on the network.
An added benefit of having fitted a generative model is that we can employ the model for further tasks afterwards, e.g., to generate surrogate network data.

Most of our algorithmic ideas can be extended to more general functions $f$ which inherit the low-rank structure of the (expected) SBM, provided an appropriate concentration result for the system matrix under consideration can be derived.
For instance, this includes those functions describing a low-pass graph filter, as considered in~\cite{Wai2018,schaub_2019_spectral}, but also functions of the type $f(\bm A) = \sum_\ell \alpha_\ell \bm A^\ell$ for some suitably chosen parameters $\alpha_\ell$.

To adjust the proposed algorithms to such setups, we would have to establish and prove appropriate relationships between the spectral properties of the system matrix on one side and the partition structure and parameters of the model on the other.
Specifically, step 3 of~\Cref{alg} uses the particular structure of the eigenvectors of $\bm L$ (see~\Cref{T:eigenvectors_and_partitions}).
Of course, the set of equations relating the model parameters to the statistics of the observed outputs will in general be different and will depend on the specific model chosen.
In fact, some of the non-identifiability problems we encountered here in terms of recovering the model parameters may not arise for other classes of system matrices $f(\bm A)$.

Our work also opens up several additional avenues for future research. 
This includes the analysis of the inference of more realistic network models based on a setup as discussed here, ranging from degree-corrected SBMs~\cite{Karrer2011} all the way to graphons~\cite{Lovasz2012,Avella-Medina2017}.
Likewise, it will be interesting to extend the above results to more general dynamical processes, including nonlinear dynamics.
Another appealing line of future work would be to consider scenarios in which only a part of the network is observed~\cite{Mauroy2017}, or additional (control) inputs are possible.

\appendix
\section{Omitted proofs}
\subsection{Proof of~\Cref{T:eigenvectors_and_partitions}}
In order to prove~\Cref{T:eigenvectors_and_partitions}, we provide a slightly more general discussion about the relationship between the eigenvectors of $\nEA$ and the partition indicator matrix $\bm G$.
\Cref{T:eigenvectors_and_partitions} then follows directly from~\Cref{T:eigenvectors2} below.

\begin{definition}[Normalized affinity matrix]
    Consider an SBM with affinity matrix $\bm \Omega$ and partition indicator matrix $\bm G$.
    Let us denote the vector of group sizes by $\bm n_g = \bm G^\top \bm 1$ and define $\bm N_g := \diag(\bm n_g) = \bm G^\top \bm G$.
    We then define the normalized affinity matrix as
    \begin{equation*}
        \bm \Gamma = \bm N_g^{1/2}\diag(\bm \Omega\bm n_g )^{-1/2}\bm \Omega \diag(\bm \Omega\bm n_g )^{-1/2}\bm N_g^{1/2}.
    \end{equation*}
\end{definition}

Note that $\nEA$ can be written as
\begin{equation}
    \nEA = \bm G \bm N_g^{-1/2}\bm \Gamma \bm N_g^{-1/2}\bm G^\top.
\end{equation}

\begin{lemma}
     Let $\nEA$ be the expected normalized adjacency matrix of an SBM with a corresponding normalized affinity matrix $\bm \Gamma$ with eigenvalues $\lambda_i$ and corresponding eigenvectors $\bm u_i$. 
     Then, any eigenvector $\bm u_i$ 
     of $\bm \Gamma$ gives rise to an eigenvector $\bm v_i = \bm G \bm N_g^{-1/2}\bm u_i$ of $\nEA$ with the same eigenvalue.
\end{lemma}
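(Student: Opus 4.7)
The plan is a direct verification by substitution, exploiting the explicit factorization
\[
\nEA = \bm G \bm N_g^{-1/2}\bm \Gamma \bm N_g^{-1/2}\bm G^\top
\]
that was recorded just before the lemma statement. I would take the candidate eigenvector $\bm v_i = \bm G \bm N_g^{-1/2}\bm u_i$ and simply apply $\nEA$ to it, expanding the product and looking for cancellations.

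The key step will be collapsing the middle of the resulting product $\bm G \bm N_g^{-1/2}\bm \Gamma \bm N_g^{-1/2}\bm G^\top \bm G \bm N_g^{-1/2}\bm u_i$. Here I would use the defining identity $\bm N_g = \bm G^\top \bm G$ of the normalized affinity construction to obtain $\bm N_g^{-1/2}(\bm G^\top \bm G)\bm N_g^{-1/2} = \bm N_g^{-1/2}\bm N_g \bm N_g^{-1/2} = \bm I_k$. After this cancellation, the expression reduces to $\bm G \bm N_g^{-1/2}\bm \Gamma \bm u_i$, at which point the hypothesis $\bm \Gamma \bm u_i = \lambda_i \bm u_i$ pulls out the scalar and gives $\lambda_i \bm G \bm N_g^{-1/2}\bm u_i = \lambda_i \bm v_i$, as required.

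There is essentially no obstacle here; the only point worth noting is that $\bm N_g = \diag(\bm n_g)$ is invertible (and thus $\bm N_g^{-1/2}$ is well defined) because the SBM was assumed to have $k$ nonempty blocks, so every diagonal entry of $\bm N_g$ is strictly positive. One might optionally remark that the map $\bm u_i \mapsto \bm G \bm N_g^{-1/2}\bm u_i$ is injective on $\mathbb{R}^k$ (since $\bm G \bm N_g^{-1/2}$ has full column rank $k$), so a basis of eigenvectors of $\bm \Gamma$ is transported to a linearly independent collection of eigenvectors of $\nEA$, which is exactly what is needed for the subsequent derivation of Lemma~\ref{T:eigenvectors_and_partitions}.
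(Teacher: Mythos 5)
Your proof is correct and is essentially identical to the paper's: both apply $\nEA = \bm G \bm N_g^{-1/2}\bm \Gamma \bm N_g^{-1/2}\bm G^\top$ to $\bm v_i$, collapse the middle via $\bm G^\top\bm G = \bm N_g$, and invoke $\bm \Gamma\bm u_i = \lambda_i\bm u_i$. The added remarks on invertibility of $\bm N_g$ and injectivity of $\bm u \mapsto \bm G\bm N_g^{-1/2}\bm u$ are sensible but not part of the paper's argument.
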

\begin{proof}
   A simple computation shows that 
   \begin{align*}
       \nEA \bm G \bm N_g^{-1/2}\bm u_i &= \bm G \bm N_g^{-1/2}\bm \Gamma \bm N_g^{-1/2}\bm G^\top\bm G \bm N_g^{-1/2}\bm u_i = \bm G \bm N_g^{-1/2}\bm \Gamma\bm u_i = \lambda_i \bm G \bm N_g^{-1/2}\bm u_i = \lambda_i\bm v_i.
   \end{align*}
\end{proof}

\begin{corollary}\label{T:eigenvectors1}
    The partition indicator matrix $\bm G$ can be written as a linear combination of the eigenvectors $\bm v_1, \ldots, \bm v_k$ corresponding to the $k$ nonzero eigenvectors of $\nEA$.
\end{corollary}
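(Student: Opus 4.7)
The plan is to leverage the explicit factorization $\nEA = \bm G \bm N_g^{-1/2}\bm \Gamma \bm N_g^{-1/2}\bm G^\top$ together with the preceding lemma, which already tells us how eigenvectors of the small $k\times k$ matrix $\bm \Gamma$ lift to eigenvectors of $\nEA$. The overall strategy is to invert this lifting map: we will show that the columns of $\bm G$ sit in the column span of the lifted eigenvectors $\bm v_1, \ldots, \bm v_k$, and in fact can be recovered by an explicit invertible transformation.

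First, I would observe that $\bm \Gamma$ is a symmetric $k \times k$ matrix, so by the spectral theorem it admits an orthonormal eigenbasis $\bm u_1, \ldots, \bm u_k$ of $\mathbb{R}^k$. Collecting these into the orthogonal matrix $\bm U := [\bm u_1, \ldots, \bm u_k]$, we have $\bm U \bm U^\top = \bm I_k$. The assumption that $\bm{\mathcal A}$, and hence $\nEA$, has rank exactly $k$ forces all eigenvalues of $\bm \Gamma$ to be nonzero, so the lifted vectors $\bm v_i = \bm G \bm N_g^{-1/2} \bm u_i$ from the previous lemma are precisely the eigenvectors of $\nEA$ associated with its $k$ nonzero eigenvalues.

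Second, I would assemble these lifted eigenvectors column-by-column into $\bm V := [\bm v_1, \ldots, \bm v_k]$, which by the lemma satisfies the matrix identity $\bm V = \bm G \bm N_g^{-1/2} \bm U$. Since $\bm U$ is orthogonal and $\bm N_g$ is a positive diagonal matrix, both are invertible. Right-multiplying by $\bm U^\top \bm N_g^{1/2}$ then gives
\begin{equation*}
    \bm G = \bm V \, \bm U^\top \bm N_g^{1/2},
\end{equation*}
which exhibits each column of $\bm G$ as an explicit linear combination of $\bm v_1, \ldots, \bm v_k$, with coefficients read off from the corresponding column of $\bm U^\top \bm N_g^{1/2}$.

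There is no significant obstacle here: the only thing to be careful about is the rank hypothesis, which is what guarantees that none of the $\bm u_i$ are lost when passing to the nonzero-eigenvalue eigenspace of $\nEA$, and that $\bm U$ is genuinely a full $k \times k$ orthogonal matrix rather than a thin matrix. This rank assumption is already built into the paper's standing hypothesis on $\bm{\mathcal A}$, so the argument goes through unchanged. The identity above is actually slightly stronger than the statement of the corollary, and it will be exactly what is needed to prove \Cref{T:eigenvectors_and_partitions} from this corollary (by further row-normalizing and absorbing the diagonal scaling into an orthogonal factor $\bm{\mathcal U}$).
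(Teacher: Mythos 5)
Your argument is correct and follows essentially the same route as the paper: assemble the lifted eigenvectors into $\bm V = \bm G \bm N_g^{-1/2}\bm U$ via the preceding lemma, then invert the (orthogonal times positive diagonal) factor to obtain $\bm G = \bm V \bm U^\top \bm N_g^{1/2}$. Your added remark that the rank-$k$ hypothesis is what keeps all $k$ eigenvalues of $\bm\Gamma$ nonzero (so that $\bm v_1,\ldots,\bm v_k$ really are the eigenvectors attached to the nonzero spectrum) is a worthwhile clarification the paper leaves implicit.
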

\begin{proof}
    From the previous lemma we know that the eigenvectors of $\nEA$ can be written as  $\bm v_i = \bm G \bm N_g^{-1/2}\bm u_i$, where $\bm u_i$ are the eigenvectors of $\bm \Gamma$.
    Assembling these relationships into matrices we obtain
    \begin{equation}\label{E:proof_lem_050}
        \bm V = \bm G \bm N_g^{-1/2} \bm U.
    \end{equation}
    Since $\bm U$ is an orthogonal matrix and $\bm N_g^{1/2}$ is diagonal, both matrices are invertible. It then follows that
\begin{equation}
    \bm G = \bm V \bm U^\top \bm N_g^{1/2}.
\end{equation}

\end{proof}

\begin{corollary}\label{T:eigenvectors2}
    Consider the row-normalized eigenvector matrix $\widetilde{\bm V} = \diag(\bm V\bm V^\top)^{-1/2}\bm V$.
    There exists an orthogonal transformation $\bm{\mathcal U}$ such that $\widetilde{\bm V} = \bm G \bm{ \mathcal U}$.
\end{corollary}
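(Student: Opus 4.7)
The plan is to start directly from the structural decomposition established in the preceding corollary, namely $\bm V = \bm G \bm N_g^{-1/2} \bm U$, where $\bm U$ is the orthogonal matrix whose columns are the eigenvectors of the normalized affinity matrix $\bm \Gamma$. The natural candidate for the orthogonal matrix is $\bm{\mathcal U} := \bm U$ itself, and the entire task reduces to verifying that row-normalization of $\bm V$ exactly cancels the diagonal factor $\bm N_g^{-1/2}$.

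First, I would compute the Gram-type product $\bm V \bm V^\top$ using orthogonality of $\bm U$:
\begin{equation*}
    \bm V \bm V^\top \;=\; \bm G \bm N_g^{-1/2} \bm U \bm U^\top \bm N_g^{-1/2} \bm G^\top \;=\; \bm G \bm N_g^{-1} \bm G^\top.
\end{equation*}
For a node $i$ whose latent group is $g_i = j$, the $i$-th row of $\bm G$ equals the standard basis vector $\bm e_j^\top \in \mathbb{R}^k$, so the diagonal entry $[\bm V \bm V^\top]_{ii}$ is simply $n_j^{-1}$, where $n_j$ denotes the size of group $j$. Consequently, the diagonal scaling matrix $\diag(\bm V \bm V^\top)^{-1/2}$ acts on row $i$ by multiplication by $n_j^{1/2}$.

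Second, I would verify that this scaling absorbs the factor $\bm N_g^{-1/2}$ on the left of $\bm G$. Indeed, for node $i$ in group $j$ and any group index $l$, the $(i,l)$ entry of $\diag(\bm V \bm V^\top)^{-1/2} \bm G \bm N_g^{-1/2}$ equals $n_j^{1/2} \cdot G_{il} \cdot n_l^{-1/2}$, which evaluates to $1$ when $l = j$ and to $0$ otherwise; that is exactly $G_{il}$. Hence $\diag(\bm V \bm V^\top)^{-1/2} \bm G \bm N_g^{-1/2} = \bm G$, and multiplying on the right by $\bm U$ yields
\begin{equation*}
    \widetilde{\bm V} \;=\; \diag(\bm V \bm V^\top)^{-1/2}\bm V \;=\; \bm G \bm U,
\end{equation*}
as desired.

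There is no real obstacle here: everything is forced by the block structure of $\bm G$ together with orthogonality of $\bm U$, which makes every row of $\bm V$ depend on only the node's group label. The only point that deserves care is checking that the diagonal $\diag(\bm V \bm V^\top)$ is strictly positive, so that the inverse square root is well-defined; this follows because each node belongs to some group of positive size (equivalently, $\bm N_g$ is invertible, which is already built into the rank-$k$ assumption on $\bm{\mathcal A}$). Setting $\bm{\mathcal U} = \bm U$ then completes the proof.
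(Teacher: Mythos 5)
Your proposal is correct and follows essentially the same route as the paper: both start from the decomposition $\bm V = \bm G \bm N_g^{-1/2}\bm U$, compute $\diag(\bm V\bm V^\top)=\diag(\bm G\bm N_g^{-1}\bm G^\top)$ via orthogonality of $\bm U$, and observe that the row-normalization cancels the factor $\bm N_g^{-1/2}$ because $\bm G$ is an indicator matrix. Your entrywise verification and the remark on positivity of the diagonal are just more explicit versions of the same steps.
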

\begin{proof}
    Note that 
    \begin{align*}
        \diag(\bm V\bm V^\top) &= \diag(\bm G \bm N_g^{-1/2} \bm U  \bm U^\top\bm N_g^{-1/2}\bm G^\top)= \diag(\bm G\bm N_g^{-1}\bm G^\top).
    \end{align*}
    Combining this equation with \eqref{E:proof_lem_050} and the fact that $\bm G$ is an indicator matrix it follows that
    \begin{align*}
        \widetilde{\bm V} & = \diag(\bm G \bm N_g^{-1} \bm G^\top)^{-1/2}\bm V = \diag(\bm G \bm N_g^{-1} \bm G^\top)^{-1/2} \bm G \bm N_g^{-1/2} \bm U  = \bm G \bm N_g^{1/2} \bm N_g^{-1/2}\bm U = \bm G \bm U.
    \end{align*}
    Since $\bm U$ corresponds to the eigenvector basis of the symmetric normalized affinity matrix $\bm \Gamma$, we know that $\bm U$ is orthogonal.
\end{proof}

\subsection{Proof of~\Cref{T:concentration_A}}
\begin{proof}
    We define the matrices $\ED := \mathbb E[\bm D]$ and ${\bm H := \ED^{-1/2}\bm A \ED^{-1/2}}$ for notational convenience.
    From the triangle inequality we have
    \begin{equation}\label{E:proof_lem_010}
        \|\nA - \nEA\| \leq \|\nA - \bm H\| + \|\bm H - \nEA\|.
    \end{equation}
    To obtain our desired concentration inequality, we bound the two terms on the right-hand side of~\eqref{E:proof_lem_010} separately.

    For the first term we note that since $\|\nA\| \le 1$,
    \begin{flalign}\label{E:proof_lem_020}
        \|\nA - \bm H\| &= \| \bm D^{-1/2}\bm A \bm D^{-1/2} -\ED^{-1/2}\bm A\ED^{-1/2}\| =\| \nA -\ED^{-1/2}\bm D^{1/2}\nA\bm D^{1/2}\ED^{-1/2}\| \nonumber\\
                        &=\left \| (\bm I -\ED^{-1/2}\bm D^{1/2})\nA\bm D^{1/2}\ED^{-1/2} + \nA(\bm I -\bm D^{1/2}\ED^{-1/2})\right \| \nonumber\\
                        &\le \|\bm I -\ED^{-1/2}\bm D^{1/2}\| \| \bm D^{1/2}\ED^{-1/2}\| +\|\bm I -\ED^{-1/2}\bm D^{1/2}\|.
    \end{flalign}

    We can now analyze the term $\|\bm I -\ED^{-1/2}\bm D^{1/2}\|$ as follows:
    \begin{align*}
        \|\bm I -\ED^{-1/2}\bm D^{1/2}\| = \max_i \left|1 - \sqrt{D_{ii} /\mathcal D_{ii}} \right | \le \max_i \left |1 - D_{ii}/\mathcal D_{ii}\right|,
    \end{align*}
    where we have used the fact that the spectral norm of a diagonal matrix is its maximal diagonal entry.
    To make use of the above result, we need to obtain a concentration result for the degrees. 
    Recalling that ${\delta}_i = \mathbb{E}(d_i) = \mathcal{D}_{ii}$, we can use Chernoff's inequality to obtain a bound of the form:
    \begin{align*}
        \mathbb{P}(|d_i - \delta_i| \ge t \delta_i ) \le \frac{\epsilon}{2n} \quad \text{for } t \ge \sqrt{\frac{\ln(4n/\epsilon)}{\delta_i}}.
    \end{align*}

    If we choose $t_0=\sqrt{\ln(4n/\epsilon)/\delta_\text{min}}$, the above bound holds for all $i$, and by a simple transformation we obtain $\mathbb{P}(|d_i/\delta_i - 1| \ge t_0) \le \epsilon /2n$ for all $i$.
    Using the union bound, we thus have with probability at least $1-\epsilon/2$:
    \begin{align*}
        \|\bm I -\ED^{-1/2}\bm D^{1/2}\| = \max_i \left|1 - \sqrt{\frac{D_{ii}}{ \mathcal D_{ii}}} \right | \le \sqrt{\frac{\ln(4n/\epsilon)}{\delta_\text{min}}}.
    \end{align*}
    Therefore, plugging this result into~\eqref{E:proof_lem_020}, with probability $1-\epsilon/2$ it holds that
    \begin{align}\label{E:proof_lem_030}
        \|\nA - \bm H\| \le t_0(t_0+1) + t_0.
    \end{align}

    For the second term $\|\bm H - \nEA\|$, we can use the following matrix concentration inequality.
    \begin{lemma}[Bernstein Matrix inequality~\cite{chung2011spectra}]\label{L:bernstein_matrix}
        Let $\bm X_{i}$ be independent $n\times n$ dimensional random Hermitian matrices such that ${\|\bm X_{i} - \mathbb{E}[\bm X_{i}]\|\le C \quad \forall i}$, and  $\bm X = \sum_{i} \bm X_{i}$.  
        Then for any $a>0$:
    \begin{equation*}
    \mathbb{P}(\|\bm X - \mathbb{E}[\bm X]\| > a) \le 2n \exp\left(\frac{-a^2}{2v^2 + 2Ca/3}\right),
    \end{equation*}
    where $v^2 = \| \sum_{i} \mathrm{var}(\bm X_{i})\| =\|\sum_i \mathbb{E}[\bm X_i^2] - \mathbb{E}[\bm X_i]^2 \| $.
    \end{lemma}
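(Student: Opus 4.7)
My proposal is to follow the standard non-commutative Laplace transform argument, along the lines of Ahlswede--Winter and Tropp. Write $\bm Y := \bm X - \mathbb{E}[\bm X] = \sum_i \bm Y_i$ with $\bm Y_i := \bm X_i - \mathbb{E}[\bm X_i]$, so each $\bm Y_i$ is centered, Hermitian, and satisfies $\|\bm Y_i\| \le 2C$ (or $C$ after absorbing the constant). Since $\|\bm Y\| = \max\{\lambda_{\max}(\bm Y),\lambda_{\max}(-\bm Y)\}$ and the family $\{-\bm Y_i\}$ satisfies the same hypotheses, a one-sided bound on $\mathbb{P}(\lambda_{\max}(\bm Y) > a)$, inflated by a factor of $2$, will yield the two-sided estimate; this produces the leading $2$ in the prefactor $2n$.

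Next I would apply the matrix Markov inequality: for any $t>0$,
\begin{equation*}
\mathbb{P}(\lambda_{\max}(\bm Y) > a) = \mathbb{P}\bigl(e^{t\lambda_{\max}(\bm Y)} > e^{ta}\bigr) \le e^{-ta}\,\mathbb{E}\bigl[\mathrm{tr}\, e^{t\bm Y}\bigr],
\end{equation*}
where I have used the elementary inequality $e^{t\lambda_{\max}(\bm Y)} \le \mathrm{tr}\,e^{t\bm Y}$ (all eigenvalues of $e^{t\bm Y}$ are positive). The main obstacle is now to bound $\mathbb{E}[\mathrm{tr}\,\exp(t\sum_i \bm Y_i)]$: in the matrix setting the scalar identity $\exp(\sum_i Y_i) = \prod_i \exp(Y_i)$ fails, so the MGFs do not factor across independent summands. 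The standard resolution invokes Lieb's concavity theorem, which asserts that for any fixed Hermitian $\bm H$ the map $\bm Z \mapsto \mathrm{tr}\exp(\bm H + \log \bm Z)$ is concave on the positive definite cone. Applying Jensen's inequality inductively over the $\bm Y_i$ yields the subadditivity of matrix cumulant generating functions,
\begin{equation*}
\mathbb{E}\bigl[\mathrm{tr}\,\exp\bigl(t\textstyle\sum_i \bm Y_i\bigr)\bigr] \le \mathrm{tr}\,\exp\!\bigl(\textstyle\sum_i \log \mathbb{E}[e^{t\bm Y_i}]\bigr).
\end{equation*}

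The remainder is a scalar-style calculation on the individual matrix MGFs. For each centered $\bm Y_i$ with $\|\bm Y_i\|\le C$, the Taylor series together with the semidefinite inequality $\bm Y_i^k \preceq C^{k-2}\bm Y_i^2$ for $k\ge 2$ yields the Bernstein-type bound
\begin{equation*}
\mathbb{E}[e^{t\bm Y_i}] \preceq \exp\!\Bigl(\tfrac{t^2/2}{1-Ct/3}\,\mathbb{E}[\bm Y_i^2]\Bigr), \qquad 0 < t < 3/C.
\end{equation*}
Taking matrix logarithms, summing, and using $\mathrm{tr}\exp(\bm A) \le n\,\lambda_{\max}(\exp(\bm A)) = n\exp(\lambda_{\max}(\bm A))$ together with $\bigl\|\sum_i \mathbb{E}[\bm Y_i^2]\bigr\| = v^2$ gives
\begin{equation*}
\mathbb{E}[\mathrm{tr}\,e^{t\bm Y}] \le n\exp\!\Bigl(\tfrac{t^2 v^2/2}{1-Ct/3}\Bigr),
\end{equation*}
which supplies the remaining factor of $n$ in the $2n$ prefactor. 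Choosing $t = a/(v^2 + Ca/3) \in (0,3/C)$ to approximately minimize $-ta + t^2 v^2/(2(1-Ct/3))$ produces the Bernstein exponent $-a^2/(2v^2 + 2Ca/3)$, yielding the stated bound. The single genuinely non-trivial ingredient is Lieb's concavity theorem; the remaining manipulations parallel the proof of the classical scalar Bernstein inequality, now carried out in the semidefinite order.
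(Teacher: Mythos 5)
Your argument is correct, but it is worth pointing out that the paper does not prove this lemma at all: it is imported verbatim from Chung and Radcliffe \cite{chung2011spectra} and used as a black box inside the proof of Lemma~\ref{T:concentration_A}, so there is no in-paper proof to compare against. What you have written is the standard modern (Tropp-style) derivation, and the steps all check out: the symmetrization giving the factor $2$, the matrix Markov bound $\mathbb{P}(\lambda_{\max}(\bm Y)>a)\le e^{-ta}\,\mathbb{E}[\mathrm{tr}\,e^{t\bm Y}]$, the subadditivity of the matrix cumulant generating function via Lieb's concavity theorem, the semidefinite bound $\mathbb{E}[e^{t\bm Y_i}]\preceq \exp\bigl(\tfrac{t^2/2}{1-Ct/3}\,\mathbb{E}[\bm Y_i^2]\bigr)$ (which follows from $\bm Y_i^k\preceq C^{k-2}\bm Y_i^2$ and $k!\ge 2\cdot 3^{k-2}$), and the choice $t=a/(v^2+Ca/3)$, which indeed produces the exponent $-a^2/(2v^2+2Ca/3)$. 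One substantive remark: your use of Lieb's theorem (rather than the older Ahlswede--Winter route via Golden--Thompson) is not merely a stylistic choice here --- it is what lets you obtain the variance parameter $v^2=\bigl\|\sum_i\mathbb{E}[\bm Y_i^2]\bigr\|$, with the norm outside the sum, exactly as the lemma states; the Golden--Thompson argument would only give the generally larger quantity $\sum_i\|\mathbb{E}[\bm Y_i^2]\|$. Two trivial nits: the hypothesis already gives $\|\bm Y_i\|\le C$ directly (no factor of $2$ to absorb), and you should state explicitly that the chosen $t$ lies in $(0,3/C)$, which holds because $Ct/3 = (Ca/3)/(v^2+Ca/3)<1$.
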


    To apply this result, note that we can decompose ${[\bm H - \nEA]}= \sum_{i\le j}\bm X_{ij}$, where $\bm X_{ij}$ are the Hermitian matrices:
    \begin{equation*}
        \bm X_{ij}= \begin{cases}
            \frac{A_{ij}-p_{ij}}{\sqrt{\delta_i\delta_j}}(\bm E^{ij} + \bm E^{ij}), & \text{for } i\neq j,\\
            \frac{A_{ii}-p_{ii}}{\delta_i}\bm E^{ii}, & \text{for } i = j.
        \end{cases}
    \end{equation*}
    Here, $\bm E^{ij}$ is the canonical basis matrix with entries $[\bm E^{ij}]_{ij}=1$ and $0$ otherwise.
    As $\mathbb{E}[\bm X_{ij}] = 0$ we compute $\text{var}(\bm X_{ij}) = \mathbb{E}[\bm X_{ij}^2]$ as follows:
    \begin{equation*}
        \text{var}(\bm X_{ij}) = \begin{cases}{}
            \frac{1}{\delta_i\delta_j}(p_{ij}-p_{ij}^2) (\bm E^{ii} + \bm E^{jj}), & \text{for } i \neq j,\\
            \frac{1}{\delta_i^2}(p_{ii}-p_{ii}^2) \bm E^{ii}, & \text{for } i = j.
        \end{cases} 
    \end{equation*}
    From this we can bound $v^2$ as 
    \begin{align*}
        v^2 &= \left\| \sum_{i=1}^n \sum_{j=1}^n \frac{1}{\delta_i\delta_j} (p_{ij}- p_{ij}^2)\bm{E}^{ii}\right\| = \max_i \left ( \sum_{j=1}^n \frac{1}{\delta_i\delta_j}p_{ij} -  \sum_{j=1}^n \frac{1}{\delta_i\delta_j}p_{ij}^2\right )\\
            &\le \max_i \left ( \frac{1}{\delta_\text{min}}\sum_{j=1}^n \frac{1}{\delta_i}p_{ij}\right ) = \frac{1}{\delta_\text{min}},
    \end{align*}
    where we have used the bound $1/\delta_\text{min} \ge 1/\delta_j$.

    Based on the above calculations, we select ${a= \sqrt{3\ln(4n/\epsilon)/\delta_\text{min}} = \sqrt{3}t_0}$, which leads to our desired bound for the second term
    \begin{align}\label{E:proof_lem_040}
    \mathbb{P}(\|\bm H - \nEA\| > a) &\le 2n \exp\left(\frac{-a^2}{2/\delta_\text{min} + 2a/(3\delta_\text{min})}\right) \nonumber\\
                                                         &\le 2n \exp\left( -\frac{3\ln(4n/\epsilon)}{3}\right) \le \epsilon/2,
    \end{align}
where the first inequality follows from the fact that $C \leq 1/\delta_{\min}$ (cf.~Lemma~\ref{L:bernstein_matrix}), and for the second inequality we used the fact that $\mathcal M(\bm{\mathcal A}) \in \mathcal M_n(\epsilon)$ to determine a lower bound on $\delta_{\min}$.
    Finally, by combining the results in \eqref{E:proof_lem_030} and \eqref{E:proof_lem_040}  we obtain that with probability $1-\epsilon$,
    \begin{align*}
        \|\nA - \nEA\|  \le \|\nA - \bm H\| + \|\bm H - \nEA\| \le a + t_0^2 +2t_0 \le 3a.
    \end{align*}

\end{proof}

\bibliographystyle{siamplain}
\bibliography{references}

\end{document}